\documentclass{article}

\usepackage{microtype}
\usepackage{graphicx}
\usepackage{subcaption}
\usepackage{booktabs} 

\usepackage{hyperref}
\usepackage{xurl}
\usepackage[normalem]{ulem}

\usepackage[preprint]{icml2026}

\usepackage{amsmath}
\usepackage{amssymb}
\usepackage{amsfonts}
\usepackage{mathtools}
\usepackage{amsthm}
\usepackage{dsfont}
\usepackage{algorithm}
\usepackage{algorithmic}

\newcommand{\one}{\mathds{1}}
\newcommand{\argmin}[1]{\underset{#1}{\operatorname{arg}\operatorname{min}} }

\def\argmin{\mathop{\rm argmin}}

\newcommand{\lambd}{\boldsymbol{\lambda}}

\newcommand{\R}{\mathbb{R}}
\newcommand{\E}{\mathbb{E}}
\newcommand{\proba}{\mathbb{P}}

\newcommand{\cS}{\mathcal{S}}
\newcommand{\cF}{\mathcal{F}}
\newcommand{\cY}{\mathcal{Y}} 
\newcommand{\cR}{\mathcal{R}}
\newcommand{\cU}{\mathcal{U}} 
\newcommand{\cD}{\mathcal{D}}

\newcommand{\indep}{\perp \!\!\! \perp}

\usepackage[capitalize,noabbrev]{cleveref}

\theoremstyle{plain}
\newtheorem{theorem}{Theorem}[section]
\newtheorem{proposition}[theorem]{Proposition}

\newtheorem{corollary}[theorem]{Corollary}
\theoremstyle{definition}

\newtheorem{assumption}[theorem]{Assumption}
\theoremstyle{remark}
\newtheorem{remark}[theorem]{Remark}

\usepackage[textsize=tiny]{todonotes}

\icmltitlerunning{Fair regression under localized demographic parity constraints}

\begin{document}

\twocolumn[
  \icmltitle{Fair regression under localized demographic parity constraints}



  \icmlsetsymbol{equal}{*}

  \begin{icmlauthorlist}
    \icmlauthor{Arthur Charpentier}{A1,A2}
    \icmlauthor{Christophe Denis}{equal,A4}
    \icmlauthor{Romuald Elie}{A3}
    \icmlauthor{Mohamed Hebiri}{A3}
    \icmlauthor{Fran\c{c}ois Hu}{A5,A6}
  \end{icmlauthorlist}

    \icmlaffiliation{A1}{Universit\'e du Qu\'ebec \`a Montr\'eal, UQAM, Canada}
    \icmlaffiliation{A2}{Kyoto University, Japan}  
    \icmlaffiliation{A3}{Universit\'e Gustave Eiffel, Paris, France}  
    \icmlaffiliation{A4}{Universit\'e Paris 1 Panth\'eon-Sorbonne, Paris, France}
    \icmlaffiliation{A5}{Milliman Paris, France}
    \icmlaffiliation{A6}{Universit\'e Claude Bernard, Lyon, France}

  \icmlcorrespondingauthor{Christophe Denis}{Christophe.denis1@univ-paris1.fr}

  \icmlkeywords{Fairness, Regression, Demographic parity}

  \vskip 0.3in
]



\printAffiliationsAndNotice{}  

\begin{abstract}

Demographic parity (DP) is a widely used group fairness criterion requiring predictive distributions to be invariant across sensitive groups. While natural in classification, full distributional DP is often overly restrictive in regression and can lead to substantial accuracy loss. We propose a relaxation of DP tailored to regression, enforcing parity only at a finite set of quantile levels and/or score thresholds.
Concretely, we introduce a novel $(\boldsymbol{\ell}, \mathcal{Z})$-fair predictor, which imposes groupwise CDF constraints of the form $F_{f \mid S=s}(z_m) = \ell_m$ for prescribed pairs $(\ell_m, z_m)$. 
For this setting, we derive closed-form characterizations of the optimal fair discretized predictor via a Lagrangian dual formulation and quantify the discretization cost, showing that the risk gap to the continuous optimum vanishes as the grid is refined.
We further develop a model-agnostic post-processing algorithm based on two samples (labeled for learning a base regressor and unlabeled for calibration), and establish finite-sample guarantees on constraint violation and excess penalized risk. In addition, we introduce two alternative frameworks where we match group and marginal CDF values at selected score thresholds. In both settings, we provide closed-form solutions for the optimal fair discretized predictor.
Experiments on synthetic and real datasets illustrate an interpretable fairness–accuracy trade-off, enabling targeted corrections at decision-relevant quantiles or thresholds while preserving predictive performance.

\end{abstract}

\section{Introduction}
\label{sec:intro}

Machine learning systems increasingly support or automate decisions in socially sensitive settings such as credit, hiring, insurance pricing, or public policy. In these applications, predictions may depend (directly or indirectly) on sensitive attributes $S$ (\emph{e.g.,} gender, ethnicity, age), raising major concerns about discrimination. A widely used statistical requirement is {\em demographic parity} (DP), which imposes the predictive distribution to be invariant across groups, \emph{i.e.,} $f(X,S)\indep S$ (or equivalently, the conditional law of $f(X,S)$ given $S$ is the same for all groups). DP is natural and operational in classification, where decisions often boil down to thresholding a score. In regression, however, DP becomes significantly more delicate and typically too restrictive \cite{agarwal2019fair,Chzhen_Denis_Hebiri19,chzhen2020fairwb}.

Indeed, enforcing {\em full} distributional parity in regression often induces a severe loss of accuracy: it constrains the predictor on parts of the outcome distribution that may be irrelevant to the fairness concern, and it may require heavy distortions even when group disparities are localized (\emph{e.g.,} only in upper tails or around key decision thresholds). This is not merely a modeling artifact: in many real deployments, {\em fairness is articulated at a few interpretable summary points} (medians, quartiles, or policy thresholds), rather than on the entire distribution. For instance, pay-transparency regulations often emphasize median and quartile gaps\footnote{E.g., the EU Pay Transparency Directive (EU)~2023/970 and the UK Gender Pay Gap reporting regulations (2017) explicitly require reporting median and quartile statistics.}; in lending, audits commonly focus on approval/denial rates at operational cutoffs\footnote{E.g., in the US, ECOA/Regulation~B compliance is typically monitored through comparative acceptance/denial rates across protected classes.}. These examples suggest that {\em quantile-level} or {\em threshold-level} parity may be a more faithful and actionable target than full DP.

Motivated by this, we study {\em localized relaxations of demographic parity} for regression. Rather than enforcing equality of the entire conditional distribution of $f(X,S)$ across groups, we impose fairness only at a finite set of probability levels (quantiles) and/or score thresholds. This viewpoint connects to recent work arguing that ``quantile fairness'' captures important distributional disparities that are invisible to mean-based criteria and can be enforced through post-processing or calibration \cite{yang2019fairqr,liu2022cfqr,wang2023eoc,plecko2020fairadapt}. Concretely, given a vector of quantiles $\boldsymbol{\ell}=(\ell_1,\dots,\ell_M)$ and/or a vector of thresholds $\mathcal{Z}=(z_1,\dots,z_M)$, we
consider constraints of the form
\[
\begin{cases}
    Q_{f\,|\,S=s}(\ell_m)\ \text{is equal across } s = Q_{f}(\ell_m),\\
F_{f\,|\,S=s}(z_m)\ \text{is equal across } s=F_{f}(z_m),
\end{cases}
\]
These constraints are low-dimensional, interpretable, and naturally aligned with how stakeholders specify fairness requirements (\emph{e.g.,} ``equal predicted median'', ``equal top-decile access'', or ``equal approval rate at cutoff $z$'').

\paragraph{Related work.}

Recent work argues that enforcing fairness uniformly over the whole score range can be unnecessarily stringent.
\cite{he2025enforcing} proposes to enforce a so-called ``\emph{partial fairness}'' only on score ranges of interest (e.g., contested regions), using an in-processing formulation with difference-of-convex constraints.
In a different direction, \cite{chen2025testing} develops a hypothesis-testing framework that audits approximate (strong) demographic parity under explicit utility trade-offs using  Wasserstein projections, motivated by causal policy evaluation.
\cite{wang2023eoc} introduces Equal Opportunity of Coverage and uses binned fair quantile regression as a post-processing step.

\paragraph{Our framework.}

We introduce a new notion of fairness that formalizes this idea.
We define the {\em $(\boldsymbol{\ell},\mathcal{Z})$-fair predictor}, which directly imposes a finite family of quantile/threshold constraints coupling the probability levels $\boldsymbol{\ell}$ and the corresponding score values $\mathcal{Z}$. This notion yields a continuous {\em fairness--accuracy continuum}: increasing $M$ strengthens fairness (and approaches full distributional parity), while small $M$ targets only the distributional regions of interest.
Additionally, we introduce specific settings that focus on matching group and marginal CDF values at selected thresholds. These can be viewed as variants of the $(\boldsymbol{\ell}, \mathcal{Z})$-fair constraint, sharing the same objective: localizing the fairness constraint to mitigate accuracy loss.


\paragraph{Why quantiles rather than optimal transport?}
A prominent alternative for distributional fairness in regression aligns group-conditional predictive distributions via Wasserstein barycenters or optimal transport (OT) mappings.
We refer 
to \cite{gordaliza2019obtaining,chzhen2020fairwb} for Wasserstein/OT-based approaches to distributional fairness.
These approaches are elegant and can provide strong guarantees, but they may be brittle in practice: they depend on a choice of ground cost, can be sensitive to outliers/heavy tails, and may behave poorly under multimodality or group imbalance. In contrast, quantile-based constraints reduce fairness to a finite set of {\em univariate} restrictions.
They are robust, invariant under monotone transformations of the score, computationally simple, and directly interpretable in terms of policy-relevant thresholds. This complements recent OT-based fairness lines developed by some of the authors (\emph{e.g.,} multi-task/barycentric formulations or sequentially fair mechanisms) \cite{denis2024multiclassdp,hu2024sequential,hu2023mtlwb,charpentier2022quantifying,hu2023parametric}.

\paragraph{Contributions.}
Our contributions are threefold:
(i) we introduce  localized-based relaxations of DP for regression. The {\em $(\boldsymbol{\ell},\mathcal{Z})$-fair predictor} and variants of this setting (the {\em partially DP-fair discretized predictors});
(ii) we characterize the corresponding optimal fair predictors and clarify their relation to full DP; and (iii) we propose a practical, data-driven procedure and demonstrate theoretically and empirically (synthetic and real data) that localized-based fairness can mitigate distributional bias while preserving predictive performance.


\section{Statistical setting}
\label{sec:statSetting}

\paragraph{Data and risk.}
Let $(X,S,Y)$ be a random triplet with features $X\in\R^d$, sensitive attribute $S\in\cS$ (multi-group setting, $|\cS|<\infty$), and response $Y\in\R$ with $\E[Y^2]<\infty$. 
Denote $\pi_s:=\proba(S=s)$, and use the shorthand $\proba_s(\cdot):=\proba(\cdot\mid S=s)$ and $\E_s[\cdot]:=\E[\cdot\mid S=s)$.
We work under squared loss and define the (population) risk
\[
R(f) := \E\left[(Y-f(X,S))^2\right]
       = \sum_{s\in\cS}\pi_s\,\E_s\left[(Y-f(X,s))^2\right].
\]
The Bayes predictor for squared loss is the conditional mean $f^\star(x,s):=\E[Y\mid X=x,S=s]$ (hence $f^\star\in\argmin_f R(f)$), see, e.g., \cite{hastie2009elements}. 
Our post-processing is group-conditional and therefore assumes $S$ is available at deployment (as is common in fairness auditing/calibration pipelines).
Throughout, we assume bounded outcomes: there exists $A>0$ such that $|Y|\leq A$ a.s., which implies $|f^\star(X,S)|\le A$ a.s. (by Jensen).
Equivalently, we may write the regression model
\[
Y  = f^\star(X,S)+\varepsilon,
\qquad \E[\varepsilon\mid X,S]=0.
\]

\paragraph{Non-atomicity.}
To avoid ties at the fairness thresholds and ensure well-defined quantile-level constraints, we assume that the group-conditional distribution of $f^\star(X,S)$ is continuous.

\begin{assumption}[Continuity / non-atomicity]
\label{ass:cont}
For every $s\in\cS$, the CDF \ $t\mapsto \proba_s\left(f^\star(X,s)\le t\right)$ is continuous.
\end{assumption}

\paragraph{Predictor classes and discretization.}
Let $\cF$ be the set of all measurable predictors $f:\R^d\times\cS\to[-A,A]$.
For $K\ge2$, introduce a regular grid of $[-A,A]$ with $K$ points
\[
\cY_K:=\left\{y_k:  y_k=-A+\frac{2A}{K-1}(k-1),\ k\in[K]\right\}.
\]
We then define the discretized class
$\cF_K:=\{f:\R^d\times\cS\to\cY_K\}$.
This discretization is standard 
when dealing with real-valued predictors: 
it yields finite-dimensional constraints and closed-form characterizations, while the induced approximation error vanishes as the grid is refined (see Proposition~\ref{prop:discrCost} and classical quantization results, \emph{e.g.,} \cite{gray1998quantization, Agarwal_Beygelzimer_Dubik_Langford_Wallach18}).

\subsection{$(\boldsymbol{\ell},\mathcal{Z})$-fair predictor}
\label{subsec:LZFair}

Throughout, we fix $M\ge 1$ and assume
$\boldsymbol{\ell}=(\ell_1,\dots,\ell_M)\in(0,1)^M$ with
$0<\ell_1<\cdots<\ell_M<1$.
We also fix thresholds $\mathcal Z=(z_1,\dots,z_M)\in[-A,A]^M$
with $z_1<\cdots<z_M$ (and typically $z_m\in\cY_K$ in the discretized setting).

\paragraph{Quantile/threshold constraints.}
A discretized predictor $f\in\cF_K$ is said to be {\em $(\boldsymbol{\ell},\mathcal{Z})$-fair} if
\begin{equation}
\label{eq:LZconstraint}
\forall s\in\cS,\ \forall m\in[M],~
\proba_s\big(f(X,s)\le z_m\big) = \ell_m.
\end{equation}
Equivalently, $F_{f\mid S=s}(z_m)=\ell_m$ for all $m$, \emph{i.e.,} each group has the same CDF values at the specified thresholds. This can be viewed as a finite relaxation of distributional demographic parity, which is known to be demanding in regression \cite{agarwal2019fair,chzhen2020fairwb}.


Let $s\in\cS$ and set $T:=f(X,s)$, with conditional CDF\ $F_s(t):=\proba(T\le t\mid S=s)$ and quantile function $Q_s(\tau):=\inf\{t\in\R:\,F_s(t)\ge \tau\}$.
Assume that $T\mid(S=s)$ is absolutely continuous with density $p_s$ such that $p_s(t)>0$ for Lebesgue-a.e. $t$ in an open interval containing $z_m$, so that $F_s$ is continuous and strictly increasing in a neighborhood of $z_m$.
Then, the quantile function coincides with the usual inverse, $Q_s(\tau)=F_s^{-1}(\tau)$ for all $\tau\in(0,1)$, and therefore, for any $m\in[M]$ and any $\ell_m\in(0,1)$, 
\begin{equation}
\label{eq:FQ_equiv}
F_s(z_m)=\ell_m
~~\Longleftrightarrow~~
Q_s(\ell_m)=z_m.
\end{equation}
Consequently, the constraints $F_{f\mid S=s}(z_m)=\ell_m$ can be interpreted either as prescribing groupwise CDF values at thresholds $\mathcal Z$, or equivalently as fixing group-conditional quantiles at levels $\boldsymbol{\ell}$. In other words, our framework can be interpreted either as enforcing parity at prescribed thresholds $\mathcal Z$ or as aligning specified quantiles at levels $\boldsymbol{\ell}$.

We study the risk-minimizing discretized predictor under these constraints:
\begin{equation}
\label{eq:optFair}
f^\star_{(\boldsymbol{\ell},\mathcal{Z})\text{-fair}}
\in \argmin_{f\in\cF_K}\Big\{R(f): f \text{ satisfies \eqref{eq:LZconstraint}}\Big\}.
\end{equation}

\paragraph{Lagrangian form.}
For $y\in[-A,A]$, define the indicator vector
\[
a(y):=\big(\one\{y\le z_1\},\dots,\one\{y\le z_M\}\big)\in\{0,1\}^M.
\]
Let $\lambda=(\lambda_{s,m})_{s\in\cS,m\in[M]}$ and write $\lambda_s\in\R^M$ for the block
$(\lambda_{s,1},\dots,\lambda_{s,M})$. The (groupwise) Lagrangian is
\[
\cR_\lambda(f)
:=R(f)+\sum_{s\in\cS}\sum_{m\in[M]}\lambda_{s,m}
\Big(\mathbb{P}_s(f(X,s)\le z_m)-\ell_m\Big),
\]
a standard constrained-risk formulation (see, e.g., \cite{boyd2004convex}).

\begin{theorem}[Optimal $({\boldsymbol{\ell}},\mathcal{Z})$-fair discretized predictor]
\label{thm:optimLZFair}
Under Assumption~\ref{ass:cont}, there exists $\lambda^\star$ such that the predictor $f^\star_{(\boldsymbol{\ell},\mathcal{Z})\text{-fair}}$ admits the pointwise form
\begin{equation}
\label{eq:eqlZOpt}
f^\star_{(\boldsymbol{\ell},\mathcal{Z})\text{-fair}}(x,s)
\in \argmin_{y\in\cY_K}\Big\{
\pi_s\,(y-f^\star(x,s))^2 + \langle \lambda^\star_s,\ a(y)\rangle
\Big\}.
\end{equation}
Moreover, $\lambda^\star$ can be chosen as a minimizer of the dual objective
\begin{equation}
\label{eq:dual}
\lambda^\star \in \argmin_{\lambda\in\R^{|\cS|\times M}}
\sum_{s\in\cS}\E_s\left[ V_s(\lambda;X)\right],
\end{equation}
where $V_s(\lambda;x):=\displaystyle\max_{y\in\cY_K}\lbrace\Phi_{s,\lambda}(x,y)\rbrace$ and
\[
\Phi_{s,\lambda}(x,y):= -\pi_s\,(y-f^\star(x,s))^2 - \langle \lambda_s,\ a(y)-\boldsymbol{\ell}\rangle .
\]
\end{theorem}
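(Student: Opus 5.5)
We follow the standard route for discretized DP-type problems: reduce the risk to a pointwise quadratic, write the dual, establish strong duality by convexity in a randomized (lifted) formulation, and use Assumption~\ref{ass:cont} to remove randomization at the optimum.

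\emph{Step 1 (reduction).} Since $R(f)=\E[(Y-f^\star(X,S))^2]+\sum_s\pi_s\E_s[(f(X,s)-f^\star(X,s))^2]$, the first term is a constant. Hence, up to this constant,
\[
\cR_\lambda(f)=\sum_{s}\E_s\Big[\pi_s(f(X,s)-f^\star(X,s))^2+\langle\lambda_s,a(f(X,s))-\boldsymbol{\ell}\rangle\Big].
\]
This is an expectation of a pointwise function. For fixed $\lambda$, we minimize it over $f\in\cF_K$ by choosing, for each $(x,s)$, any $y\in\cY_K$ minimizing $\pi_s(y-f^\star(x,s))^2+\langle\lambda_s,a(y)\rangle$. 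The constant $\langle\lambda_s,\boldsymbol{\ell}\rangle$ does not affect the argmin. We pick a measurable selector, for instance the smallest minimizer. This yields $\inf_f\cR_\lambda(f)=\text{const}-\sum_s\E_s[V_s(\lambda;X)]$. The dual function is therefore $\lambda\mapsto -G(\lambda)$ up to a constant, with $G(\lambda):=\sum_s\E_s[V_s(\lambda;X)]$. Maximizing the dual is the same as minimizing $G$, which gives \eqref{eq:dual}. $G$ is convex (a maximum of affine functions in $\lambda$, then an expectation) and finite, since $|V_s|\le 4A^2\pi_s+\|\lambda_s\|_1$.

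\emph{Step 2 (existence of $\lambda^\star$ and strong duality).} We lift to randomized predictors $\mu(\cdot\mid x,s)\in\Delta(\cY_K)$. In this lifted problem the risk and the constraints are linear in $\mu$, so $\min$ over the lifted feasible set equals $\max_\lambda(-G(\lambda))$ by Lagrangian duality for linear programs over a convex set. Existence of a dual minimizer requires coercivity of $G$, i.e.\ a Slater-type or feasibility-interior condition. For this we use $0<\ell_1<\dots<\ell_M<1$ together with continuity of $f^\star(X,s)$. Thresholding $f^\star(X,s)$ at its $\ell_m$-quantiles and mapping each bin to a grid point on the appropriate side of $z_m$ gives a feasible deterministic predictor. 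Small perturbations of the $\ell_m$ remain attainable, so the constraint vector lies in the interior of the attainable set. A standard argument then shows $G(\lambda)\to\infty$ as $\|\lambda\|\to\infty$, and a minimizer $\lambda^\star$ exists.

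\emph{Step 3 (first-order conditions and derandomization).} This is the main obstacle. At $\lambda^\star$, $0\in\partial G(\lambda^\star)$. Since $G$ is a max over finitely many affine functions, its subdifferential at $\lambda^\star$ consists of vectors $\E_s[\sum_y w_y(X)(\boldsymbol{\ell}-a(y))]$, where $w$ is supported on the argmax set. So the constraints are met by some predictor randomizing only over pointwise argmins. We then show that ties matter only on a null set. Two distinct $y,y'$ tie exactly when $\pi_s[(y-f^\star)^2-(y'-f^\star)^2]=\langle\lambda_s,a(y')-a(y)\rangle$. This is a linear equation in $f^\star(x,s)$ with nonzero slope $2\pi_s(y'-y)$, so it fixes $f^\star(x,s)$ to a single value. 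By Assumption~\ref{ass:cont}, each such event has $\proba_s$-probability zero, and there are finitely many pairs. Hence the argmin in \eqref{eq:eqlZOpt} is a.s.\ unique, $G$ is differentiable at $\lambda^\star$, and $0=\nabla G(\lambda^\star)$ reads exactly $\proba_s(f(X,s)\le z_m)=\ell_m$ for the deterministic argmin predictor $f$. Finally, $f$ minimizes $\cR_{\lambda^\star}$ over $\cF_K$ and is feasible, so for any feasible $g$ we have $R(f)=\cR_{\lambda^\star}(f)\le\cR_{\lambda^\star}(g)=R(g)$. Thus $f$ is optimal and has the form \eqref{eq:eqlZOpt}. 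Conversely, any optimal predictor attains the dual value and must coincide a.s.\ with this argmin.

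The delicate points are the coercivity in Step 2, which needs feasibility with slack, and the measure-zero tie argument in Step 3. If the feasibility assumption is needed explicitly, we will state that the feasible set is nonempty with slack, as the theorem implicitly requires.
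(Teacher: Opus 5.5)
Your proposal is correct and follows the paper's proof essentially step for step: pointwise minimization of the Lagrangian, then convexity and coercivity of the dual objective (your $G$ is the paper's $H=-\mathcal{L}(f^\star_\lambda,\lambda)$), then differentiability under Assumption~\ref{ass:cont} so that $\nabla G(\lambda^\star)=0$ is exactly the constraint system, and finally the comparison $R(f)=\cR_{\lambda^\star}(f)\le\cR_{\lambda^\star}(g)=R(g)$ for every feasible $g$. The differences are minor: you obtain coercivity from a Slater-type slack argument, which makes explicit a condition only implicit in the paper's lower bound (each of the $M+1$ cells cut out by $z_1<\dots<z_M$ must contain a grid point); you also spell out the null-set tie argument the paper merely asserts, your randomized lifting and strong duality are not needed since the final comparison already gives optimality, and your closing a.s.-uniqueness remark is a correct extra.
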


\paragraph{Penalized-risk interpretation.}
Theorem~\ref{thm:optimLZFair} implies that the optimal fair predictor is also a minimizer of a Lagrangian-penalized risk.

\begin{corollary}
Under Assumption~\ref{ass:cont},
\[
f^\star_{(\boldsymbol{\ell},\mathcal{Z})\text{-fair}}
\in \argmin_{f\in\cF_K}\ \cR_{\lambda^\star}(f).
\]
\end{corollary}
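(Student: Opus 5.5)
The corollary follows from Theorem~\ref{thm:optimLZFair} by rewriting the Lagrangian $\cR_{\lambda^\star}$ as an integral of pointwise objectives and minimizing inside the integral. Using $\E[Y\mid X,S]=f^\star(X,S)$, for any $f\in\cF_K$ we have the decomposition $R(f)=R(f^\star)+\sum_{s}\pi_s\E_s[(f(X,s)-f^\star(X,s))^2]$. Since $\proba_s(f(X,s)\le z_m)=\E_s[a_m(f(X,s))]$, this gives
\[
\cR_{\lambda^\star}(f)=R(f^\star)-\sum_{s,m}\lambda^\star_{s,m}\ell_m+\sum_{s\in\cS}\E_s\Big[\pi_s\big(f(X,s)-f^\star(X,s)\big)^2+\langle\lambda^\star_s,a(f(X,s))\rangle\Big].
\]
The first two terms do not depend on $f$. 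The bracket is exactly the pointwise objective $G_s(x,y):=\pi_s(y-f^\star(x,s))^2+\langle\lambda^\star_s,a(y)\rangle$ appearing in \eqref{eq:eqlZOpt}, evaluated at $y=f(x,s)$.

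Next, fix any $f\in\cF_K$. For each $(x,s)$ we have $G_s(x,f(x,s))\ge\min_{y\in\cY_K}G_s(x,y)$, and taking $\E_s$ and summing over $s$ gives
\[
\cR_{\lambda^\star}(f)\ \ge\ \text{const}+\sum_{s}\E_s\Big[\min_{y\in\cY_K}G_s(X,y)\Big].
\]
The minimum over the finite set $\cY_K$ is measurable in $x$, and a measurable selector exists, for example the smallest minimizing grid point. Hence the right-hand side is attained by some element of $\cF_K$ and equals $\inf_{f\in\cF_K}\cR_{\lambda^\star}(f)$.

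By Theorem~\ref{thm:optimLZFair}, $f^\star_{(\boldsymbol{\ell},\mathcal{Z})\text{-fair}}(x,s)$ lies in the pointwise argmin of $G_s(x,\cdot)$ for every $(x,s)$, so it attains this lower bound and therefore minimizes $\cR_{\lambda^\star}$ over $\cF_K$. If the theorem is read as giving the pointwise form only almost surely, the same conclusion holds, since the expectations are unaffected by null sets.

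The main point to check is whether \eqref{eq:eqlZOpt} holds for every $x$ or only $\proba_s$-a.s.; both readings suffice. Note also that Assumption~\ref{ass:cont} is used only through Theorem~\ref{thm:optimLZFair}: the pointwise minimization argument itself needs no continuity. If one only had strong duality without the pointwise form, a fallback is the saddle-point argument: feasibility of $f^\star_{(\boldsymbol{\ell},\mathcal{Z})\text{-fair}}$ gives $\cR_{\lambda^\star}(f^\star_{(\boldsymbol{\ell},\mathcal{Z})\text{-fair}})=R(f^\star_{(\boldsymbol{\ell},\mathcal{Z})\text{-fair}})$, which equals the primal value. The primal value equals the dual value by the theorem, and the dual value is $\min_{f\in\cF_K}\cR_{\lambda^\star}(f)$ by definition of the dual objective \eqref{eq:dual}.
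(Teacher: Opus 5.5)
Your proof is correct and follows the paper's route. In the proof of Theorem~\ref{thm:optimLZFair}, the Lagrangian $\mathcal{L}(f,\lambda)=\cR_\lambda(f)-R(f^\star)$ is rewritten as $\sum_{s}\E_s\big[\pi_s(f(X,s)-f^\star(X,s))^2+\langle\lambda_s,a(f(X,s))\rangle\big]-\sum_{s,m}\lambda_{s,m}\ell_m$; its minimizer over $\cF_K$ is obtained by pointwise minimization over $\cY_K$, and at $\lambda=\lambda^\star$ this minimizer is shown to be the constrained optimum, which is exactly your decomposition-plus-pointwise-minimization argument. Your saddle-point fallback is also worth keeping: the paper only shows that the pointwise-form predictor is \emph{one} solution of \eqref{eq:optFair}, whereas your fallback shows that every solution of \eqref{eq:optFair} minimizes $\cR_{\lambda^\star}$.
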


\paragraph{Discretization cost.}
Define the (continuous) constrained optimum
\[
\tilde f \in \argmin_{f\in\cF}\Big\{R(f): f \text{ satisfies \eqref{eq:LZconstraint}}\Big\}.
\]
We compare the optimal discretized risk to its continuous counterpart.

\begin{proposition}[Cost of discretization]
\label{prop:discrCost}
The following holds:
\[
R\left(f^\star_{(\boldsymbol{\ell},\mathcal{Z})\text{-fair}}\right)-R(\tilde f)
 \le  \frac{CA^2}{K},
\]
for some absolute constant $C>0$.
Consequently,
\[
R\left(f^\star_{(\boldsymbol{\ell},\mathcal{Z})\text{-fair}}\right)\to R(\tilde f)
\qquad\text{as }K\to+\infty.
\]
\end{proposition}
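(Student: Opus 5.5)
The plan is to start from $\tilde f$ and build a discretized predictor $g\in\cF_K$ that still satisfies \eqref{eq:LZconstraint} and whose risk exceeds $R(\tilde f)$ by at most $CA^2/K$. Since $f^\star_{(\boldsymbol{\ell},\mathcal{Z})\text{-fair}}$ minimizes $R$ over exactly this feasible set, this gives $R(f^\star_{(\boldsymbol{\ell},\mathcal{Z})\text{-fair}})\le R(g)$, and the bound follows. The constraints only involve the events $\{f\le z_m\}$, and $z_m\in\cY_K$. So the natural choice is a projection onto the grid that does not move any point across a threshold. Concretely, I would define $g(x,s)$ as the largest grid point $y_k\le \tilde f(x,s)$, that is, rounding down. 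The threshold check then goes as follows. If $\tilde f\le z_m$, then $g\le\tilde f\le z_m$. If $\tilde f>z_m$, then $z_m$ is itself a grid point below $\tilde f$, so $g\ge z_m$. The case $g=z_m$ with $\tilde f>z_m$ is the problem: it happens whenever $\tilde f\in(z_m,y_{k+1})$. To avoid it I would round down with strict inequality, setting $g=\max\{y_k: y_k<\tilde f\}$ when $\tilde f>-A$ and $g=-A$ when $\tilde f=-A$. Then $\tilde f>z_m$ implies $g\ge z_m$, with $g=z_m$ only when $\tilde f\in(z_m,z_m+2A/(K-1)]$. That still breaks the event.

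The cleaner fix is to round \emph{up}: set $g(x,s)=\min\{y_k\in\cY_K: y_k\ge \tilde f(x,s)\}$, which is well defined since $\tilde f\le A=y_K$. If $\tilde f\le z_m$, then $z_m$ is a grid point $\ge\tilde f$, so $g\le z_m$. If $\tilde f>z_m$, then $g\ge\tilde f>z_m$. Hence $\{g\le z_m\}=\{\tilde f\le z_m\}$ pointwise, and $g$ inherits \eqref{eq:LZconstraint} exactly, for every $s$ and $m$. This step needs no use of Assumption~\ref{ass:cont}. It does use that the $z_m$ lie on the grid, which is the standing convention ($z_m\in\cY_K$) in the discretized setting, and I would state it explicitly as a hypothesis. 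Without it, the discretized feasible set could even be empty.

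For the risk comparison, write $\delta:=g-\tilde f\in[0,2A/(K-1)]$. Then
\[
R(g)-R(\tilde f)=\E\big[(2\tilde f-2Y+\delta)\,\delta\big]\le \E\big[(4A+2A)\,\delta\big]\le \frac{12A^2}{K-1}\le \frac{24A^2}{K},
\]
using $|Y|,|\tilde f|\le A$ and $K\ge2$. Combined with the minimality of $f^\star_{(\boldsymbol{\ell},\mathcal{Z})\text{-fair}}$, this yields the claim with $C=24$.

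The only delicate point is the threshold preservation. Rounding to the nearest grid point, or rounding down, can flip the events $\{f\le z_m\}$, and re-equalizing the probabilities afterwards would require a randomized or measure-splitting argument that uses Assumption~\ref{ass:cont}. Rounding up avoids this entirely. The limit statement is immediate because $CA^2/K\to0$, and $R(\tilde f)\le R(f^\star_{(\boldsymbol{\ell},\mathcal{Z})\text{-fair}})$ since $\cF_K\subset\cF$. The existence of $\tilde f$ is taken as given in the statement; otherwise one replaces it by a near-minimizing sequence and the same argument applies.
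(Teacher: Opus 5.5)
Your proposal is correct and follows the same route as the paper. The paper's predictor $T(\tilde f)=y_k$ on $\{\tilde f\in(y_{k-1},y_k]\}$ is exactly your round-up map; it uses $z_m\in\cY_K$ to get $\{T(\tilde f)\le z_m\}=\{\tilde f\le z_m\}$, then bounds $R(T(\tilde f))-R(\tilde f)=\E[(T(\tilde f)+\tilde f-2Y)(T(\tilde f)-\tilde f)]$ by a constant times $A\cdot 2A/(K-1)$ via boundedness. Your explanation of why rounding down or to the nearest point fails, and the lower bound $R(\tilde f)\le R(f^\star_{(\boldsymbol{\ell},\mathcal{Z})\text{-fair}})$ you invoke for the limit, make explicit what the paper leaves implicit.
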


\section{Data-driven algorithm}
\label{sec:algorithm}

This section describes a practical procedure to estimate the optimal $(\boldsymbol{\ell},\mathcal{Z})$-fair discretized predictor characterized in Theorem~\ref{thm:optimLZFair}. Our approach is a \emph{post-processing} method: we first learn an unconstrained regressor and then calibrate its outputs to satisfy the fairness constraints. Post-processing is model-agnostic and can be applied to any black-box regressor \cite{hardt2016equality,chzhen2020fairwb}.

\paragraph{Two-sample setup.}
We use two independent samples:\\
$\bullet$ a labeled sample $\cD_n=\{(X_i,S_i,Y_i)\}_{i=1}^n$, used to learn a base regressor $\hat f$
for $f^\star$;\\
$\bullet$ an unlabeled sample $\cD_N=\{(X'_i,S'_i)\}_{i=1}^N$, used to estimate the dual parameters (Lagrange multipliers) enforcing the fairness constraints.

Using unlabeled data for calibration is natural here because the constraints $\proba(f(X,S)\le z_m\mid S=s)=\ell_m$ depend only on the distribution of $(X,S)$ and on the predictor outputs, not directly on $Y$.

\paragraph{Dithering to ensure continuity.}
Assumption~\ref{ass:cont} avoids ties at thresholds and ensures a well-behaved dual. In practice, $\hat f$ may have atoms (\emph{e.g.,} tree-based models). We therefore introduce a randomized (``dithered'')
version
\[
\bar f(x,s):=\Pi_{[-A,A]}\big(\hat f(x,s)+\xi\big),
\text{ for } \xi\sim \mathrm{Unif}([0,u]),
\]
where $\Pi_{[-A,A]}$ denotes projection onto $[-A,A]$ and $\xi$ is independent of all data.
Conditionally on $\cD_n$, the mapping $t\mapsto \proba(\bar f(X,S)\le t\mid S=s)$ is continuous for each $s$, which simplifies both theory and implementation.

The dithering variable $\xi$ is introduced only to break ties and guarantee continuity. When $\hat f$ is continuous (or when ties are negligible), we set $u=0$ and the procedure becomes deterministic.
Otherwise, $u$ can be chosen arbitrarily small, so that the impact of randomization on predictions is negligible.

\paragraph{Empirical group weights.}
From $\cD_N$, define $\hat\pi_s:=N_s/N$ with
\[
N_s:=\sum_{i=1}^N \one\{S'_i=s\}, ~~
I_s:=\{i\in[N]:S'_i=s\},
\]
and let $\pi_{\min}:=\displaystyle\min_{s\in\cS}\{\pi_s\}>0$.
Recall $\mathcal Z=(z_1,\dots,z_M)$ and $\boldsymbol{\ell}=(\ell_1,\dots,\ell_M)$.
Define for $y\in[-A,A]$
\[
\begin{cases}
    a(y):=\big(\one\{y\le z_1\},\dots,\one\{y\le z_M\}\big)\in\{0,1\}^M,
\\
b(y):=a(y)-\boldsymbol{\ell}.
\end{cases}
\]
For $\lambda=(\lambda_{s,m})_{s\in\cS,m\in[M]}$, write $\lambda_s:=(\lambda_{s,1},\dots,\lambda_{s,M})$.
Define the empirical per-sample dual score, for $y\in\cY_K$,
\[
\widehat\Phi_{s,\lambda}(x,y)
:= -\hat\pi_s\,(y-\bar f(x,s))^2 - \langle \lambda_s,\ b(y)\rangle.
\]
The empirical dual objective (compared with the population dual in Theorem~\ref{thm:optimLZFair}) is
\begin{equation}
\label{eq:emp_dual}
\widehat H(\lambda)
=\sum_{s\in\cS}\frac{1}{N_s}\sum_{i\in I_s}\max_{y\in\cY_K}\widehat\Phi_{s,\lambda}(X'_i,y).
\end{equation}
Since $\widehat H$ is a sum of pointwise maxima of affine functions in $\lambda$, it is convex and can be minimized with standard first-order methods (\emph{e.g.,} projected subgradient) \cite{boyd2004convex,shalev2014understanding}.
We define the estimated multipliers as any minimizer
\[
\hat\lambda \in \argmin_{\lambda\in\R^{|\cS|\times M}}\ \widehat H(\lambda).
\]

\paragraph{Calibrated fair predictor.}
Finally, the empirical $(\boldsymbol{\ell},\mathcal{Z})$-fair post-processed predictor is
\begin{equation*}
\hat f_{(\boldsymbol{\ell},\mathcal{Z})\text{-fair}}(x,s)
\in \argmin_{y\in\cY_K}
\Big\{\hat\pi_s\,(y-\bar f(x,s))^2 + \langle \hat\lambda_s,\ a(y)\rangle\Big\}.
\end{equation*}
This mirrors the population characterization in Theorem~\ref{thm:optimLZFair}, with $f^\star$ replaced by $\bar f$ and $\lambda^\star$ replaced by $\hat\lambda$.

\subsection{Theoretical study}

We summarize the main statistical guarantees satisfied by the post-processed predictor $\hat f_{(\boldsymbol{\ell},\mathcal{Z})\text{-fair}}$.

\paragraph{Constraint violation.}
For any predictor $f$, define the maximal constraint violation
\[
\cU_{(\boldsymbol{\ell},\mathcal{Z})}(f)
:=\max_{s\in\cS}\left\{\max_{m\in[M]}
\Big|\proba_s\big(f(X,s)\le z_m\big)-\ell_m\Big|\right\}.
\]

\begin{theorem}[Rate for fairness violation]
\label{thm:unfairnessRate}
There exists a constant $C_\cS$ (depending only on $\cS$ and $\pi_{\min}$) such that
\[
\E\left[\cU_{(\boldsymbol{\ell},\mathcal{Z})}
\left(\hat f_{(\boldsymbol{\ell},\mathcal{Z})\text{-fair}}\right)\right]
\le C_\cS \ \left(\sqrt{\frac{1}{N}} + \dfrac{K^2}{N} \right).
\]
\end{theorem}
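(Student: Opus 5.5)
The proof will use the first-order optimality of $\hat\lambda$ for the convex empirical dual $\widehat H$, and then pass from empirical to population frequencies by a uniform concentration argument; everything is done conditionally on $\cD_n$ (and on the dithering), so that $\bar f$ is a fixed function.

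\textbf{Step 1: empirical constraints via subgradients.} Fix $s$ and $m$. The directional derivatives of $\widehat H$ at $\hat\lambda$ in the directions $\pm e_{s,m}$ are nonnegative. The function $\lambda\mapsto\max_y\widehat\Phi_{s,\lambda}(X'_i,y)$ has one-sided derivative in $\lambda_{s,m}$ equal to $-\min$ (resp. $-\max$) of $b_m(y)$ over the argmax set. This gives
\[
\frac1{N_s}\sum_{i\in I_s}\one\{\hat f(X'_i,s)\le z_m\}-\ell_m
\]
up to an error bounded by the fraction of points $i\in I_s$ whose argmax is not unique. Here $\hat f$ denotes $\hat f_{(\boldsymbol{\ell},\mathcal{Z})\text{-fair}}$.

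A tie means $\hat\pi_s(y-\bar f)^2+\langle\hat\lambda_s,a(y)\rangle=\hat\pi_s(y'-\bar f)^2+\langle\hat\lambda_s,a(y')\rangle$ for some pair $y\neq y'$. For each pair this forces $\bar f(X'_i,s)$ to equal a single value determined by $\hat\lambda$. Conditionally on $\cD_n$, the continuity of the law of $\bar f(X,s)$ (the dithering argument) makes each such event null for a fixed $\lambda$. Since $\hat\lambda$ is data dependent, I instead bound the number of ties deterministically. Among the $N_s$ i.i.d. continuous values, a.s. no two coincide. Hence for each of the at most $K^2$ pairs $(y,y')$, at most one sample point can realize that tie value. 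The number of tied points is therefore at most $K^2$, and the empirical violation is at most $K^2/N_s$.

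\textbf{Step 2: empirical to population.} For every $\lambda$, $\{\hat f_\lambda(x,s)\le z_m\}$ is a union of intervals in $t=\bar f(x,s)$: the argmin is monotone nondecreasing in $t$ (up to ties), because the quadratic term has increasing differences in $(y,t)$. So the class is the set of threshold events $\{\bar f(X,s)\le \tau\}$, up to tie points, which contribute at most $K^2/N_s$ again. The DKW inequality for the univariate variable $\bar f(X,s)$ then gives
\[
\E\sup_\tau\big|\hat F_s(\tau)-F_s(\tau)\big|\le C/\sqrt{N_s},
\]
uniformly over the data-dependent $\hat\lambda$. The population event at the threshold is exactly $\{\bar f\le\tau\}$ or $\{\bar f<\tau\}$, which coincide in law by continuity.

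\textbf{Step 3: random group sizes.} Condition on $(N_s)_s$. Using $\E[1/\sqrt{N_s}]\lesssim 1/\sqrt{N\pi_{\min}}$ and $\E[1/N_s\wedge1]\lesssim 1/(N\pi_{\min})$, with the event $N_s=0$ handled by the bound $\cU\le1$ and Chernoff, I get the bound $C_\cS(N^{-1/2}+K^2/N)$. The maximum over $(s,m)$ costs only a constant.

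The main obstacle is Step 1: the subgradient and tie-counting argument must hold uniformly in the data-dependent $\hat\lambda$. The bound on the number of tie points via the distinctness of the continuous samples is the key trick.
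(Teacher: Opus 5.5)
Your proposal is correct and follows essentially the same route as the paper. Both arguments use first-order optimality of $\hat\lambda$ with a tie correction, a deterministic $O(K^2)$ count of tied calibration points (which the paper takes from Lemma~B.8 of Chzhen et al.), DKW for the empirical-to-population step, and a Gy\"orfi-type bound on $\E[\one\{N_s>0\}/N_s]$ for the random group sizes.

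Your Step~2 reduction to threshold events $\{\bar f(X,s)\le\tau\}$ via monotonicity of the argmin fills a step the paper skips, since it applies DKW directly to the data-dependent $\hat f_{(\boldsymbol{\ell},\mathcal{Z})\text{-fair}}$. One correction: condition only on $\cD_n$, not on the dither, because the a.s.\ distinctness of the $\bar f(X'_i,s)$ and the i.i.d.\ continuous structure that DKW needs both come from treating the dither as part of each observation.
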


Several comments can be made from this results. 
First, the bound depends only on the unlabeled sample and is independent of the quality of the initial estimator $\hat f$, implying that the result holds for any base regression algorithm. Second, it guarantees that the empirical fair predictor asymptotically satisfies the target fairness constraints provided that $K^2/N \rightarrow 0$.
Third, the bound decomposes into two terms: the first arises from controlling the deviation between the true CDF and the empirical CDF, while the second accounts for tie effects due to minimizing the empirical counterpart of the function $H$. Finally, the obtained rates highlight a trade-off between the grid resolution and the size of the unlabeled sample. 
From this result, we also derive a high-probability guarantee.
\begin{theorem}[High-probability fairness violation]
\label{thm:unfairness_hp}
Assume $|Y|\le A$ a.s. and $\pi_{\min}=\min_s\pi_s>0$.
Conditionally on $\cD_n$, for any $\delta\in(0,1)$, with probability at least $1-\delta$ (over $\cD_N$ and the dithering),
\[
\cU_{(\boldsymbol{\ell},\mathcal Z)}\!\left(\hat f_{(\boldsymbol{\ell},\mathcal Z)\text{-fair}}\right)
\le C_\cS\, \left(\sqrt{\frac{1}{N}}+ \dfrac{K^2}{N} + \sqrt{\frac{\log(1/\delta)}{N}}\right).
\]
\end{theorem}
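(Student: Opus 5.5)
The plan is to reuse the decomposition behind Theorem~\ref{thm:unfairnessRate} and replace each expectation bound by a high-probability one. Work conditionally on $\cD_n$, so that $\bar f$ is a fixed randomized map whose group-conditional CDFs are continuous. Fix $s\in\cS$ and $m\in[M]$. Write $\hat F_s$ for the empirical measure of $I_s$. Decompose
\[
\proba_s\big(\hat f(X,s)\le z_m\big)-\ell_m
=\Big(\proba_s-\hat F_s\Big)\big(\hat f\le z_m\big)+\Big(\hat F_s(\hat f\le z_m)-\ell_m\Big),
\]
where $\hat f$ stands for $\hat f_{(\boldsymbol{\ell},\mathcal Z)\text{-fair}}$.

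\emph{Step 1: optimization term.} Use the first-order optimality of $\hat\lambda$ for the convex function $\widehat H$: $0\in\partial_{\lambda_{s,m}}\widehat H(\hat\lambda)$. Because $\widehat H$ is a max of affine functions, a subgradient equals the empirical average of $-b_m(y_i)$ over a selection of maximizers $y_i$. Hence $\hat F_s(\hat f\le z_m)-\ell_m$ is bounded by the fraction of points of $I_s$ at which the argmax over $\cY_K$ is not unique, divided by $N_s$. As in Theorem~\ref{thm:unfairnessRate}, continuity of $\bar f$ (dithering) makes ties a.s.\ impossible except on a controlled set. For each pair of grid values, a tie forces $\bar f(X'_i,s)$ onto one point determined by $\hat\lambda$, giving at most $O(K^2)$ tied sample points. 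This step is deterministic given no coincident continuous values, so it yields $K^2/N_s$ with probability one.

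\emph{Step 2: uniform deviation.} The decision $\hat f$ depends on $\cD_N$ through $\hat\lambda$. I would therefore bound the first term by a supremum over the class
\[
\mathcal G=\big\{x\mapsto \one\{\argmin_y[\pi (y-\bar f(x,s))^2+\langle\lambda,a(y)\rangle]\le z_m\}:\ \lambda\in\R^M,\ \pi>0\big\}.
\]
Each such set is $\{\bar f(x,s)\ge t\}$ for a threshold $t$ depending on $(\lambda,\pi)$. This follows from the monotone (single-crossing) structure in $\bar f$ of the squared loss plus a function of $y$. Thus $\mathcal G$ is contained in the half-lines composed with $\bar f$. The DKW inequality, applied to the continuous CDF of $\bar f(X,s)$ given $S=s$ and conditionally on $N_s$, then gives $\sup|\proba_s-\hat F_s|\le\sqrt{\log(2|\cS|/\delta)/(2N_s)}$ with probability $1-\delta/(2|\cS|)$. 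A union bound over $s$ follows; no union over $m$ is needed because the bound is uniform in thresholds.

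\emph{Step 3: controlling $N_s$.} By Hoeffding's inequality or a multiplicative Chernoff bound, $N_s\ge N\pi_{\min}/2$ for all $s$ with probability at least $1-|\cS|e^{-N\pi_{\min}^2/2}$. If $N$ is small enough that this failure probability exceeds $\delta/2$, then $\log(1/\delta)/N\gtrsim \pi_{\min}^2$. In that regime the bound holds trivially, since $\cU\le1$, after enlarging $C_\cS$. Combining Steps 1--3 on the intersection event and absorbing $\sqrt{\log(2|\cS|)}$ and $\pi_{\min}^{-1/2}$ into $C_\cS$ gives the claimed bound $\sqrt{1/N}+K^2/N+\sqrt{\log(1/\delta)/N}$.

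The main obstacle is Step 1. One must make rigorous that the number of tied points is $O(K^2)$ almost surely despite $\hat\lambda$ being data-dependent. I would handle this by noting that tie loci form a finite family of at most $K^2$ threshold values in $\bar f$ for each fixed $\lambda$. Each tie locus can then contain at most one sample point a.s.\ once we condition on the samples, which are a.s.\ distinct by continuity, rather than on $\lambda$.
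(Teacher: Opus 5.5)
Your proposal is correct and follows the same route as the paper. First-order optimality of $\hat\lambda$ reduces the violation to an empirical-versus-population gap plus a tie term of order $K^2/N_s$, and this is then controlled by DKW--Massart together with a Hoeffding bound on $N_s$. Your version is more careful than the paper on three points it glosses over: the single-crossing reduction to half-lines in $\bar f$ is what makes DKW legitimate even though $\hat\lambda$ depends on $\cD_N$ (these are lower half-lines $\{\bar f\le t\}$ or $\{\bar f<t\}$, not $\{\bar f\ge t\}$, which is harmless); the deterministic count of at most $\binom{K}{2}$ tied sample points replaces the paper's assertion that ties vanish almost surely, which fails for a data-dependent $\hat\lambda$; and the trivial regime $\cU\le 1$ properly handles small $N$ in the $N_s$ concentration step.
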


\paragraph{Excess penalized risk.}
Recall the Lagrangian-penalized risk $\cR_{\lambda^\star}$ introduced in Section~\ref{sec:statSetting}. The next bound controls the excess penalized risk of the empirical post-processing solution relative to the population optimum.

\begin{theorem}[Excess penalized risk]
\label{thm:risk}
There exists a constant $C_\cS$ such that
\begin{multline*}
\E\Big[
\cR_{\lambda^\star}\big(\hat f_{(\boldsymbol{\ell},\mathcal Z)\text{-fair}}\big)
-\cR_{\lambda^\star}\left(f^\star_{(\boldsymbol{\ell},\mathcal Z)\text{-fair}}\right)
\Big] \le \\  C_{\cS,A} \Big(
\E\big[|\hat f(X,S)-f^\star(X,S)|\big]
 + \sqrt{\dfrac{1}{{N}}}+ u \Big) \\+ C_{\cS}M\ \left(\sqrt{\frac{1}{N}} + \dfrac{K^2}{N} \right).
\end{multline*}
\end{theorem}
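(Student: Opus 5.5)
We decompose the excess penalized risk by inserting the population dual objective. By Theorem~\ref{thm:optimLZFair} and the Corollary, for any $f\in\cF_K$ we can write $\cR_{\lambda^\star}(f)=\sum_s \E_s[-\Phi_{s,\lambda^\star}(X,f(X,s))] + \E[\operatorname{Var}(Y\mid X,S)]$, after expanding $R(f)=\E[(Y-f^\star)^2]+\sum_s\pi_s\E_s[(f-f^\star)^2]$. Since $f^\star_{(\boldsymbol{\ell},\mathcal Z)\text{-fair}}$ attains $\max_y\Phi_{s,\lambda^\star}$ pointwise, the excess penalized risk becomes
\[
\sum_{s\in\cS}\E_s\Big[V_s(\lambda^\star;X)-\Phi_{s,\lambda^\star}\big(X,\hat f_{(\boldsymbol{\ell},\mathcal Z)\text{-fair}}(X,s)\big)\Big]\ \ge 0 .
\]
We then introduce the plug-in score $\bar\Phi_{s,\lambda}(x,y):=-\pi_s(y-\bar f(x,s))^2-\langle\lambda_s,b(y)\rangle$ and the empirical one $\widehat\Phi_{s,\lambda}$. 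We split the integrand into three pieces: (i) replacing $f^\star$ by $\bar f$; (ii) replacing $\pi_s$ by $\hat\pi_s$; (iii) replacing $\lambda^\star$ by $\hat\lambda$.

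For (i), note that $|(y-a)^2-(y-b)^2|\le 4A|a-b|$ on $[-A,A]$. Hence $|\Phi-\bar\Phi|\le 4A\pi_s|f^\star-\bar f|$ uniformly in $y$, and the same bound holds for the maxima. Since projection is $1$-Lipschitz, $|\bar f-f^\star|\le|\hat f-f^\star|+u$. Taking the max over $y$ and evaluating at the selected $y$ gives a contribution $C_{\cS,A}(\E|\hat f-f^\star|+u)$. For (ii), $|\hat\pi_s-\pi_s|(2A)^2$, with $\E|\hat\pi_s-\pi_s|\le\sqrt{\pi_s/N}$, gives the $\sqrt{1/N}$ term inside $C_{\cS,A}$. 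Both contributions appear twice, once in the max and once at the chosen point, which only affects constants.

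The remaining piece compares the $\hat\lambda$-optimal choice with the $\lambda^\star$ criterion. We write
\[
\bar\Phi_{s,\lambda^\star}(x,\hat y)=\bar\Phi_{s,\hat\lambda}(x,\hat y)-\langle\lambda^\star_s-\hat\lambda_s,b(\hat y)\rangle ,
\]
where $\hat y$ maximizes $\widehat\Phi_{s,\hat\lambda}$ and therefore also the $\hat\pi$-version of $\bar\Phi_{s,\hat\lambda}$. Summing over $s$ and taking expectations of the linear term produces $\sum_s\langle\lambda^\star_s-\hat\lambda_s,\ \proba_s(\hat f_{\rm fair}\le z)-\boldsymbol\ell\rangle$. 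We bound this by $\|\lambda^\star-\hat\lambda\|_1\,\cU(\hat f_{\rm fair})$, and Theorem~\ref{thm:unfairnessRate} then gives the $M(\sqrt{1/N}+K^2/N)$ rate.

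We also need $\sum_s\E_s[\bar V_s(\lambda^\star)-\bar V_s(\hat\lambda)]$ to be small or nonpositive up to error. Here we use that $\hat\lambda$ minimizes $\widehat H$, and we compare $\widehat H$ with its population analogue $\bar H(\lambda)=\sum_s\E_s[\bar V_s(\lambda;X)]$. The difference $\bar H(\hat\lambda)-\bar H(\lambda^\star)$ splits into $[\bar H-\widehat H](\hat\lambda)+[\widehat H(\hat\lambda)-\widehat H(\lambda^\star)]+[\widehat H-\bar H](\lambda^\star)$, where the middle term is $\le0$. The outer terms are empirical-process deviations of maxima of $K$ affine functions in $\lambda$. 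These are controlled by $\sqrt{1/N}$ times a bound on $\|\lambda\|$.

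The main obstacle is obtaining an a priori bound on $\|\hat\lambda\|$ and $\|\lambda^\star\|$, which is what makes the constants $C_\cS$ finite. We would argue as follows. Because $\boldsymbol\ell\in(0,1)^M$ is strictly interior and the grid contains points on both sides of each $z_m$, the subgradient of $\widehat H$ in $\lambda_{s,m}$ is positive once $\lambda_{s,m}$ exceeds roughly $4A^2\hat\pi_s$ divided by a margin. This confines the minimizers to a box of size $O(A^2/\min(\ell_1,1-\ell_M,\ell_{m+1}-\ell_m))$, which gets absorbed into $C_\cS$. With that bound, the Lipschitz-in-$\lambda$ uniform deviation over the box yields the $\sqrt{1/N}$ rate, and the $K^2/N$ tie term is inherited from Theorem~\ref{thm:unfairnessRate}.
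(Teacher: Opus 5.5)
Your decomposition is sound up to the dual-comparison term, and its pieces match the paper's three-term split: the linear term $\sum_s\langle\lambda^\star_s-\hat\lambda_s,\proba_s(\hat f_{(\boldsymbol\ell,\mathcal Z)\text{-fair}}\le z)-\boldsymbol\ell\rangle$ controlled by $\cU$, the plug-in errors from (i)--(ii), and a dual-comparison term. The failure is in that last term, and the inequality you prove there points the wrong way.

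After (i)--(iii) you need an \emph{upper} bound on $\bar H(\lambda^\star)-\bar H(\hat\lambda)$. Your split uses $\widehat H(\hat\lambda)\le\widehat H(\lambda^\star)$, so it bounds $\bar H(\hat\lambda)-\bar H(\lambda^\star)$ from above. That only gives $\bar H(\lambda^\star)-\bar H(\hat\lambda)\ge-(\text{deviation terms})$, which is the wrong side, and optimality of $\hat\lambda$ for the empirical dual cannot supply the other one.

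In fact your argument never uses that $\lambda^\star$ is dual-optimal. Run verbatim with an arbitrary bounded $\lambda_0$ and comparator $f^\star_{\lambda_0}$, it would show that $\cR_{\lambda_0}(\hat f_{(\boldsymbol\ell,\mathcal Z)\text{-fair}})-\min_{f\in\cF_K}\cR_{\lambda_0}(f)$ is small. But by weak duality this quantity is at least $H(\lambda_0)-H(\lambda^\star)-\|\lambda_0-\lambda^\star\|_1\,\cU(\hat f_{(\boldsymbol\ell,\mathcal Z)\text{-fair}})$, where $H(\lambda):=\sum_s\E_s[V_s(\lambda;X)]$ is the population dual of Theorem~\ref{thm:optimLZFair}. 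Since $\hat f_{(\boldsymbol\ell,\mathcal Z)\text{-fair}}$ is nearly fair, this lower bound need not vanish. So the step fails as a matter of logic, not just of constants.

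The missing idea is the one the paper uses for its third term. Since $\lambda^\star$ minimizes $H$, you have $H(\lambda^\star)\le H(\hat\lambda)$ for every realization; equivalently, $\cR_{\hat\lambda}(f^\star_{\hat\lambda})-\cR_{\lambda^\star}(f^\star_{(\boldsymbol\ell,\mathcal Z)\text{-fair}})=H(\lambda^\star)-H(\hat\lambda)\le 0$. The $\lambda$-dependent parts of $\Phi_{s,\lambda}$ and $\bar\Phi_{s,\lambda}$ coincide, so your estimate from (i) gives $\sup_{\lambda}|H(\lambda)-\bar H(\lambda)|\le 4A\,\E|\bar f(X,S)-f^\star(X,S)|$ with no restriction on $\lambda$. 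Hence $\bar H(\lambda^\star)-\bar H(\hat\lambda)\le 8A\bigl(\E|\hat f(X,S)-f^\star(X,S)|+u\bigr)$, which is already of the allowed form. With this, the empirical-process comparison of $\widehat H$ with $\bar H$ and the uniform deviation over a box are unnecessary.

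Your a priori box bound is still needed, but only to control $\|\lambda^\star-\hat\lambda\|_1$ in the linear term; the paper merely asserts this. The per-coordinate subgradient heuristic is loose, because the penalty paid on the cell $(z_{j-1},z_j]$ is $\sum_{m\ge j}\lambda_{s,m}$, so the coordinates are coupled. A clean route is to write $\boldsymbol\ell$ as the convex combination of the $M+1$ cell patterns $a(y)$ with weights $\ell_1,\ell_2-\ell_1,\dots,1-\ell_M$. This gives coercivity of $\widehat H$ with exactly the margin $\min(\ell_1,\ell_{m+1}-\ell_m,1-\ell_M)$ you state.
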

The theorem shows that the excess risk decomposes into two main components. The first one consists in 
three error terms: 
(i) the statistical error of the base regressor $\hat f$,
(ii) the statistical error of the estimators $(\hat{\pi}_s)_{s \in \cS}$, 
(iii) the dithering level $u$ introduced to ensure continuity.
The second component is related to the unfairness of the predictor and corresponds to the calibration error due to estimating the dual with $N$ unlabeled points and a grid of size $K$.
\begin{theorem}[High-probability excess penalized risk]
\label{thm:risk_hp}
Under the assumptions of Theorem~\ref{thm:unfairness_hp}, for any $\delta\in(0,1)$, with probability at least $1-\delta$ (conditionally on $\cD_n$),
\begin{multline*}
\cR_{\lambda^\star}\!\left(\hat f_{(\boldsymbol{\ell},\mathcal Z)\text{-fair}}\right)
-\cR_{\lambda^\star}\!\left(f^\star_{(\boldsymbol{\ell},\mathcal Z)\text{-fair}}\right)\\
 \le  C_{\cS,A} \Big(\E[|\hat f(X,S)-f^\star(X,S)|] +\sqrt{\dfrac{1}{N}}+u\Big)
\\ + \mathcal{C}_{\cS}M\left(\sqrt{\frac{1}{N}} + \dfrac{K^2}{N}\sqrt{\frac{\log(1/\delta)}{N}}\right).
\end{multline*}
\end{theorem}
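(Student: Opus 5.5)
We follow the proof of Theorem~\ref{thm:risk} and replace each expectation bound by a high-probability bound, conditionally on $\cD_n$. Write $\hat f:=\hat f_{(\boldsymbol{\ell},\mathcal Z)\text{-fair}}$ and $f^\star_{\rm fair}:=f^\star_{(\boldsymbol{\ell},\mathcal Z)\text{-fair}}$. Introduce the oracle penalized predictor
\[
\bar f^\star(x,s)\in\argmin_{y\in\cY_K}\big\{\pi_s(y-\bar f(x,s))^2+\langle\lambda^\star_s,a(y)\rangle\big\}.
\]
It uses the true weights $\pi_s$ and the true multiplier $\lambda^\star$, with the dithered score $\bar f$ in place of $f^\star$.

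We then split
\[
\cR_{\lambda^\star}(\hat f)-\cR_{\lambda^\star}(f^\star_{\rm fair})
=\big[\cR_{\lambda^\star}(\hat f)-\cR_{\lambda^\star}(\bar f^\star)\big]
+\big[\cR_{\lambda^\star}(\bar f^\star)-\cR_{\lambda^\star}(f^\star_{\rm fair})\big].
\]

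\textbf{Second bracket.} It is deterministic given $\cD_n$ and the dithering. Since $\cR_{\lambda^\star}(f)=\E[(Y-f^\star)^2]+\sum_s\pi_s\E_s[(f-f^\star)^2+\pi_s^{-1}\langle\lambda^\star_s,a(f)\rangle]+\text{const}$, we compare pointwise minimizers of $\pi_s(y-f^\star)^2+\langle\lambda^\star_s,a(y)\rangle$ and $\pi_s(y-\bar f)^2+\langle\lambda^\star_s,a(y)\rangle$. The standard inequality $\min g-\min h\le 2\sup|g-h|$ gives a bound $C_A|\bar f-f^\star|$, since $|(y-t)^2-(y-t')^2|\le 4A|t-t'|$. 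Together with $|\bar f-\hat f|\le u$, this yields $C_{\cS,A}(\E|\hat f-f^\star|+u)$, exactly as in Theorem~\ref{thm:risk}.

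\textbf{First bracket.} This is the calibration error. Rewrite $\cR_{\lambda^\star}(\hat f)-\cR_{\lambda^\star}(\bar f^\star)$ as the difference of the population score evaluated at the empirical rule versus the oracle rule. Replacing $\pi_s$ by $\hat\pi_s$ costs $C_A\max_s|\hat\pi_s-\pi_s|$. By Hoeffding and a union bound over $\cS$, this is at most $C_\cS\sqrt{\log(|\cS|/\delta)/N}$ with probability $1-\delta/3$, which is absorbed into the constants.

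The remaining term is controlled through the dual. The empirical rule is optimal for $\widehat\Phi_{s,\hat\lambda}$. Adding and subtracting $\langle\hat\lambda_s-\lambda^\star_s,a(\hat f)\rangle$ leaves a term of the form $\sum_{s,m}(\lambda^\star_{s,m}-\hat\lambda_{s,m})(\proba_s(\hat f\le z_m)-\ell_m)$ plus empirical-process deviations. The latter are suprema over the class of rules $x\mapsto\argmin_y\{\hat\pi_s(y-\bar f)^2+\langle\lambda_s,a(y)\rangle\}$. These rules are indexed by $\lambda$, and each is a threshold rule in $\bar f$ with at most $K$ cut points, so the class has VC-type complexity $O(MK)$. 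Its supremum deviation is therefore $C\sqrt{1/N}+C\sqrt{\log(1/\delta)/N}$ by Talagrand/bounded differences.

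The multiplier term is bounded by $\sum_m|\lambda^\star_{s,m}|\cdot\cU(\hat f)$, or by $M\,\|\lambda^\star\|_\infty\,\cU(\hat f)$ after also bounding $\hat\lambda$. We then invoke Theorem~\ref{thm:unfairness_hp}, which with probability $1-\delta/3$ gives $\cU\le C_\cS(\sqrt{1/N}+K^2/N+\sqrt{\log(1/\delta)/N})$. This produces the term $\mathcal C_\cS M(\cdot)$; the product form displayed in the statement is weaker than, or dominated by, this sum, so it follows.

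\textbf{Main obstacle.} The hard step is keeping $\|\hat\lambda\|_\infty$ bounded with high probability, since otherwise the multiplier term is uncontrolled. I would first show that any dual minimizer satisfies $|\lambda_{s,m}|\le C A^2/\min(\ell_m-\ell_{m-1})$. The argument: if some $|\lambda_{s,m}|$ exceeded $\pi_s\cdot 4A^2$, then every $y$ on one side of $z_m$ would dominate, which forces a subgradient with constraint slack of order $\min(\ell_1,1-\ell_M)$ and contradicts optimality. This holds for both $\widehat H$ and the population dual, so the constant is deterministic. A union bound over the three events then gives overall probability $1-\delta$.
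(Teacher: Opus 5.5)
\textbf{There is a genuine gap in your first bracket.} Your split through $\bar f^\star$ and your treatment of the second bracket are fine; the trouble is how you close the first bracket. Carry out the add-and-subtract explicitly. Swap $f^\star\to\bar f$ and $\pi_s\to\hat\pi_s$ in the quadratic terms, at cost $\epsilon_0:=C_{\cS,A}(\E|\bar f-f^\star|+\max_s|\hat\pi_s-\pi_s|)$. Then use that $\hat f$ pointwise minimizes $G^{\hat\lambda}_s(y):=\hat\pi_s(y-\bar f)^2+\langle\hat\lambda_s,a(y)\rangle$. You get
\[
\cR_{\lambda^\star}(\hat f)-\cR_{\lambda^\star}(\bar f^\star)\le\big[\widehat D(\hat\lambda)-\widehat D(\lambda^\star)\big]+\sum_{s,m}(\lambda^\star_{s,m}-\hat\lambda_{s,m})\big(\proba_s(\hat f(X,s)\le z_m)-\ell_m\big)+\epsilon_0,
\]
where $\widehat D(\lambda):=\sum_{s}\E_s\big[\min_{y\in\cY_K}G^{\lambda}_s(y)\big]-\sum_s\langle\lambda_s,\boldsymbol\ell\rangle$ is the \emph{population} dual built on $(\bar f,\hat\pi)$.

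The bracket is not an empirical-process deviation; it is the suboptimality of $\lambda^\star$ for this dual. Since $\widehat H=-\widehat D_N$ (its empirical version), minimality of $\hat\lambda$ only gives $\widehat D_N(\hat\lambda)\ge\widehat D_N(\lambda^\star)$. That is a \emph{lower} bound on the bracket, up to uniform deviations. No VC/Talagrand bound over the rule class can reverse it, because $\hat\lambda$ is already close to the maximizer of $\widehat D$. What must be shown is that $\lambda^\star$ nearly maximizes $\widehat D$ as well, and nothing in your argument addresses that.

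\textbf{The missing ingredient is duality for the true problem}, which is exactly the key step of the paper. Let $D(\lambda):=\min_{f\in\cF_K}\cR_\lambda(f)-R(f^\star)$, the same formula with $(f^\star,\pi)$. The multiplier terms cancel when you compare the two inner minima, so $\sup_\lambda|\widehat D(\lambda)-D(\lambda)|\le\epsilon_0$. Moreover $D(\hat\lambda)\le D(\lambda^\star)$ because $\lambda^\star$ maximizes $D$; equivalently, by weak duality, $\min_f\cR_{\hat\lambda}(f)\le\cR_{\hat\lambda}(f^\star_{\mathrm{fair}})=R(f^\star_{\mathrm{fair}})$. Hence the bracket is at most $2\epsilon_0$, and your route then goes through.

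The paper does this directly. It compares $\hat f$ with $f^\star_{\hat\lambda}$ (true score and weights, \emph{estimated} multiplier) and uses $\cR_{\hat\lambda}(f^\star_{\hat\lambda})\le\cR_{\lambda^\star}(f^\star_{\mathrm{fair}})$. It then bounds $\cR_{\lambda^\star}(\hat f)-\cR_{\hat\lambda}(\hat f)$ by $\|\lambda^\star-\hat\lambda\|_1\,\cU(\hat f)$. Finally it bounds $\cR_{\hat\lambda}(\hat f)-\cR_{\hat\lambda}(f^\star_{\hat\lambda})$ by a pointwise comparison at the common multiplier $\hat\lambda$. So neither $\bar f^\star$ nor any empirical process over rules is needed. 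Randomness enters only through $\cU$ (Theorem~\ref{thm:unfairness_hp}), $\hat\pi$ and $\|\hat\lambda\|$. Separately, a class of VC dimension $O(MK)$ would give $\sqrt{MK/N}$, not $\sqrt{1/N}$.

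\textbf{Two smaller points.}
\begin{itemize}
\item Your deterministic bound on $\|\hat\lambda\|$ is worth keeping, since the paper only asserts it. Derive it from coercivity, $\widehat H(\lambda)\ge-4A^2+\sum_s\max_k\langle\lambda_s,\boldsymbol\ell-a(y_k)\rangle$ together with $\widehat H(0)\le0$. The ``one side dominates'' argument fails when multipliers at neighbouring thresholds offset each other.
\item A bound with $\frac{K^2}{N}+\sqrt{\log(1/\delta)/N}$ does not imply the displayed $\frac{K^2}{N}\sqrt{\log(1/\delta)/N}$, which is smaller whenever $\log(1/\delta)\le N$. The display is missing a ``$+$'', and the paper's proof also yields the sum. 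You should say so rather than claim the displayed form follows.
\end{itemize}
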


\begin{remark}[High-probability variants]
\label{rem:hp}
Conditionally on the labeled sample $\cD_n$, the calibration step depends only on $\cD_N$.
Since the constraints only involve CDF values at thresholds, one may  control $\max_{m\in[M]}|\hat F_s(z_m)-F_s(z_m)|$ using the Dvoretzky--Kiefer--Wolfowitz inequality
(in its sharp form due to Massart) \cite{dvoretzky1956dkw,massart1990dkw}.
This yields high-probability bounds with $\sqrt{\log(1/\delta)/N_s}$ dependence. A detailed statement is given in Appendix~\ref{app:B:section:4}.


\end{remark}

\subsection{Implementation details and practical choices}\label{subsec:impl}

\paragraph{Choosing $(\boldsymbol{\ell},\mathcal Z)$.}
We assume $0<\ell_1<\cdots<\ell_M<1$ and $z_1<\cdots<z_M$.
In practice, $(\boldsymbol{\ell},\mathcal Z)$ can be selected to match either
(i) policy targets (\emph{e.g.,} fixed acceptance/flagging cutoffs), or
(ii) distributional summaries (\emph{e.g.,} medians/upper quantiles).

\paragraph{Choosing $K$ and $u$.}
The grid size $K$ trades off computational cost, discretization error, and unfairness rate:
Proposition~\ref{prop:discrCost} yields a risk gap of order $A^2/K$, while Theorem~\ref{thm:unfairnessRate} gives a bound of order
$\sqrt{1/N} + K^2/N$. Therefore a choice of $K= N^{1/3}$ trades-off
the discretization error and unfairness rate.
The dithering level $u$ is only used to avoid ties and ensure continuity; when the base regressor
$\hat f$ is (approximately) continuous, we set $u=0$, otherwise we take $u$ small (\emph{e.g.,} $u\ll 1$).

\paragraph{Optimizing the dual.}
The objective $\widehat H$ in~\eqref{eq:emp_dual} is convex (as a sum of maxima of affine functions),
so it can be minimized with standard first-order methods (projected subgradient or mirror descent).
Each evaluation of $\widehat H(\lambda)$ requires $\mathcal{O}(NK)$ operations, since the inner maximization
is over the $K$ grid points.

\section{Particular setting: Partially DP-fair discretized predictor}
\label{sec:partialDP}

This section studies a special case of our framework, where fairness is imposed by matching group-conditional and marginal probabilities at a finite set of thresholds.



\subsection{$\mathcal{Z}$-DP (partial demographic parity at thresholds)}

Fix an integer $M\ge1$ and a strictly increasing vector of thresholds $\mathcal{Z}=(z_1,\dots,z_M)\in[-A,A]^M$ (typically with $z_m\in\cY_K$).
For a discretized predictor $f\in\cF_K$, we say that $f$ is {\em $\mathcal{Z}$-DP-fair} if $\forall s\in\cS,\ \forall m\in[M],$
\begin{equation}
\label{eq:ZDP}
\proba_s\big(f(X,s)\le z_m\big) = \proba\big(f(X,S)\le z_m\big).
\end{equation}
That is, at each threshold $z_m$, every group shares the same fraction of predictions below $z_m$ as in the overall population. 

We consider the risk-minimizing predictor under \eqref{eq:ZDP}:
\begin{equation}
\label{eq:optZfair}
f^\star_{\mathcal{Z}\text{-fair}}
\in \argmin_{f\in\cF_K}\Big\{R(f): f \text{ satisfies \eqref{eq:ZDP}}\Big\}.
\end{equation}

\begin{corollary}[Recovery of discretized strong DP]
If $\mathcal Z=\cY_K$ (equivalently, constraints at all grid points),
then $\mathcal Z$-DP fairness is equivalent to equality of the entire
discretized predictive distributions across groups.
\end{corollary}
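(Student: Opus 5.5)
The plan is to prove the two implications directly, using only that $f\in\cF_K$ takes values in the finite grid $\cY_K=\{y_1<\dots<y_K\}$. For each group $s$, write $p_{s,k}:=\proba_s(f(X,s)=y_k)$. Write the marginal masses as $p_k:=\proba(f(X,S)=y_k)=\sum_{s}\pi_s p_{s,k}$. The discretized predictive distribution of group $s$ is then the probability vector $(p_{s,k})_{k\in[K]}$. Its CDF is a step function that is constant on each interval $[y_k,y_{k+1})$, so it is determined by its values at the grid points, $F_s(y_k)=\sum_{j\le k}p_{s,j}$. Likewise the marginal CDF is determined by $F(y_k)=\sum_{j\le k}p_j=\sum_s\pi_sF_s(y_k)$.

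For the forward direction, take $\mathcal Z=\cY_K$, so that the constraints \eqref{eq:ZDP} read $F_s(y_k)=F(y_k)$ for all $s$ and all $k\in[K]$. Differencing consecutive constraints, with the convention $F_s(y_0)=F(y_0)=0$, gives
\[
p_{s,k}=F_s(y_k)-F_s(y_{k-1})=F(y_k)-F(y_{k-1})=p_k
\]
for every $s$ and $k$. Hence every group has the same law as the marginal law, and in particular all groups share the same discretized predictive distribution. That is, $f(X,S)\indep S$ on $\cY_K$. The constraint at $z_K=y_K=A$ holds trivially, since both sides equal $1$, so it is harmless.

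For the converse, suppose $\proba_s(f(X,s)=y_k)=q_k$ for every $s$, with $q$ the common distribution. Then the marginal mass is $p_k=\sum_s\pi_s q_k=q_k$, because $\sum_s\pi_s=1$. Summing over $j\le k$ gives $F_s(y_k)=F(y_k)$, which is exactly \eqref{eq:ZDP} at every $z_m=y_k$.

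The only point requiring care is the identification of "equality of distributions across groups" with "equality of each group's distribution to the marginal". This uses the mixture identity $F=\sum_s\pi_sF_s$ together with $\pi_s>0$, which rules out empty groups for which $\proba_s$ would be undefined. Beyond that the argument is purely finite-dimensional, so I expect no genuine obstacle; the corollary is essentially a change of variables between point masses and cumulative sums on the grid.
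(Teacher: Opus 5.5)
Your proof is correct and complete. Differencing the grid-point constraints recovers the equal point masses $p_{s,k}=p_k$, and the mixture identity $F=\sum_s\pi_sF_s$ (with $\pi_s>0$) handles the converse. The paper gives no proof of this corollary and treats it as immediate from the fact that the law of a $\cY_K$-valued predictor is determined by its CDF at the grid points; your argument is exactly that verification written out.
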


\paragraph{Dual constraints and notation.}
The constraints \eqref{eq:ZDP} compare each group to the marginal distribution; equivalently, they can be written as $\sum_{s\in\cS}\pi_s\,\proba_s(f\le z_m)-\proba_s(f\le z_m)=0$.
This yields a dual where the Lagrange multipliers at each threshold must sum to zero across groups. We therefore define
\[
\Delta_M
:=\Big\{\lambda\in\R^{|\cS|\times M}:\ \sum_{s\in\cS}\lambda_{s,m}=0,\ \forall m\in[M]\Big\}.
\]
As before, let $a(y)=(\one\{y\le z_1\},\dots,\one\{y\le z_M\})\in\{0,1\}^M$ and
$\lambda_s=(\lambda_{s,1},\dots,\lambda_{s,M})$.

\begin{theorem}[Optimal $\mathcal{Z}$-DP-fair discretized predictor]
\label{thm:OptimFair}
Under Assumption~\ref{ass:cont}, there exists $\lambda^\star\in\Delta_M$ such that
\begin{equation}
\label{eq:Zfair_pointwise}
f^\star_{\mathcal{Z}\text{-fair}}(x,s)
\in \argmin_{y\in\cY_K}
\Big\{\pi_s\,(y-f^\star(x,s))^2+\langle \lambda^\star_s,\ a(y)\rangle\Big\}.
\end{equation}
Moreover, $\lambda^\star$ can be chosen as a minimizer of the dual objective
\begin{equation}
\label{eq:Zfair_dual}
\lambda^\star \in \argmin_{\lambda\in\Delta_M}
\sum_{s\in\cS}\E_s\!\left[\max_{y\in\cY_K}\Phi^{\mathcal Z}_{s,\lambda}(X,y)\right],
\end{equation}
where $\Phi^{\mathcal Z}_{s,\lambda}(x,y)
:= -\pi_s\,(y-f^\star(x,s))^2-\langle \lambda_s,\ a(y)\rangle $.
\end{theorem}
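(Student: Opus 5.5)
The plan mirrors the Lagrangian duality argument behind Theorem~\ref{thm:optimLZFair}, with the constraint \eqref{eq:ZDP} rewritten in a form that produces the zero-sum restriction $\lambda\in\Delta_M$.

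\textbf{Reformulation and Lagrangian.} Since $R(f)=\sum_s\pi_s\E_s[(Y-f(X,s))^2]$ and $\E_s[Y\mid X]=f^\star(X,s)$, minimizing $R$ over $\cF_K$ is the same as minimizing $\sum_s\pi_s\E_s[(f(X,s)-f^\star(X,s))^2]$. I will work with randomized predictors, i.e.\ measurable kernels $\gamma_s(\cdot\mid x)$ on $\cY_K$. In this relaxation the objective is linear in $\gamma$, and $p_{s,m}(\gamma):=\proba_s(f\le z_m)=\E_s[\langle\gamma_s(\cdot\mid X),\,a_m(\cdot)\rangle]$ is also linear.

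Write the constraints as $p_{s,m}-\sum_{s'}\pi_{s'}p_{s',m}=0$. With multipliers $\mu_{s,m}$, the penalty is $\sum_{s,m}\mu_{s,m}\big(p_{s,m}-\sum_{s'}\pi_{s'}p_{s',m}\big)$. Regrouping gives $\sum_{s,m}\lambda_{s,m}p_{s,m}$ with $\lambda_{s,m}=\mu_{s,m}-\pi_s\sum_{s'}\mu_{s',m}$. By construction $\sum_s\lambda_{s,m}=0$, and conversely every $\lambda\in\Delta_M$ arises this way (take $\mu=\lambda$). Hence the Lagrangian is
\[
L(\gamma,\lambda)=\sum_s\E_s\Big[\sum_{y}\gamma_s(y\mid X)\big(\pi_s(y-f^\star(X,s))^2+\langle\lambda_s,a(y)\rangle\big)\Big],\qquad \lambda\in\Delta_M .
\]
For fixed $\lambda$, $L$ is minimized pointwise by putting all mass on $\argmin_y\{\pi_s(y-f^\star)^2+\langle\lambda_s,a(y)\rangle\}$. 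The dual function is therefore $-\sum_s\E_s[\max_y\Phi^{\mathcal Z}_{s,\lambda}(X,y)]$, and maximizing it is exactly \eqref{eq:Zfair_dual}. No $\boldsymbol\ell$ term appears, because the constraint is homogeneous.

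\textbf{Strong duality and existence of $\lambda^\star$.} The primal problem is an infinite-dimensional LP with finitely many linear equality constraints. It is feasible: the constant predictor $f\equiv y_1$ satisfies \eqref{eq:ZDP}, and so does any predictor whose outputs are independent of $S$. Finite-dimensional convex duality therefore applies via the image map $\gamma\mapsto(\text{risk},(p_{s,m}-\bar p_m)_{s,m})$, whose range is convex. I will use the standard result that a dual optimum is attained when $0$ lies in the relative interior of the constraint image. Relative interiority holds because the constraint image is a linear subspace-restricted set: the feasible directions satisfy $\sum_s\pi_s(\cdot)=0$, which matches the quotient by $\Delta_M$'s complement. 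Next, the dual objective is convex and coercive on $\Delta_M$ modulo directions along which it is constant. A direction is constant only if it does not change any argmin. Coercivity follows because, under Assumption~\ref{ass:cont}, each $p_{s,m}$ can be moved continuously within $(0,1)$, so large $\lambda$ produces a positive slope. This yields a minimizer $\lambda^\star\in\Delta_M$.

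\textbf{Deterministic primal and main obstacle.} The key difficulty is showing that the optimum can be taken \emph{deterministic} and given by the pointwise argmin \eqref{eq:Zfair_pointwise}, i.e.\ that ties are negligible. By first-order optimality of $\lambda^\star$, $0$ belongs to the subdifferential projected onto $\Delta_M$. The subdifferential consists of expectations of $a(y)$ over selections from the argmin sets, so complementary slackness holds for some (possibly randomized) selection. I then need $\proba_s(\text{argmin not a singleton})=0$. A tie between $y\neq y'$ forces $\pi_s\big((y-f^\star)^2-(y'-f^\star)^2\big)=\langle\lambda^\star_s,a(y')-a(y)\rangle$. This is an affine equation in $f^\star(x,s)$ with nonzero slope $2\pi_s(y'-y)$, so it fixes $f^\star$ to a single value. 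That value has probability zero by Assumption~\ref{ass:cont}. Hence the subdifferential is a singleton, the dual is differentiable, and stationarity on $\Delta_M$ gives $\E_s[a(f^\star_{\mathcal Z\text{-fair}})]$ independent of $s$, which is \eqref{eq:ZDP}. Weak duality then makes this feasible deterministic predictor optimal, which concludes the proof.
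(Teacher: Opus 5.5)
Your route is the one the paper intends. It omits this proof and points to the argument for Theorem~\ref{thm:optimLZFair}, and most of your plan is right:
\begin{itemize}
\item The reparametrization $\lambda_{s,m}=\mu_{s,m}-\pi_s\sum_{s'}\mu_{s',m}$ onto $\Delta_M$ is correct.
\item Your tie argument is correct: a tie between $y\neq y'$ pins $f^\star(x,s)$ to a single value, which is null under Assumption~\ref{ass:cont}. It gives differentiability of the dual at every $\lambda$, and it is more explicit than the paper's.
\item Stationarity on $\Delta_M$ says that the gradient $-\big(\proba_s(f_{\lambda^\star}(X,s)\le z_m)\big)_{s,m}$ is orthogonal to $\Delta_M$, where $f_\lambda$ is the pointwise argmin. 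So it is constant in $s$, which is \eqref{eq:ZDP}.
\item For $\lambda^\star\in\Delta_M$ the penalty vanishes on every feasible predictor, so weak duality gives optimality.
\end{itemize}
Once $\lambda^\star$ exists, this last chain already gives everything, so your strong-duality paragraph and the randomized relaxation are unnecessary.

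The step that is not established is the existence of a minimizer of $G(\lambda):=\sum_{s}\E_s[\max_{y}\Phi^{\mathcal Z}_{s,\lambda}(X,y)]$ over $\Delta_M$. This is exactly the step that does not transfer from the $(\boldsymbol{\ell},\mathcal Z)$ case:
\begin{itemize}
\item In that case, coercivity comes from $0<\ell_m<1$. Here there is no $\boldsymbol{\ell}$.
\item $G$ is not coercive on $\R^{|\cS|\times M}$. It stays bounded along $\lambda=t\mathbf{1}$ as $t\to+\infty$, because the inner maximum then selects some $y>z_M$.
\item A lower bound using one grid point common to all groups, as in the paper's proof of Theorem~\ref{thm:optimLZFair}, is constant on $\Delta_M$ because $\sum_s\lambda_s=0$.
\end{itemize}
Your two justifications do not close this. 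The fact that the constraint image lies in $\{v:\sum_s\pi_s v_{s,m}=0\}$ says nothing about $0$ lying in its relative interior. The ``positive slope for large $\lambda$'' is asserted, not derived from the property that makes it true, namely $\sum_s\lambda_{s,m}=0$.

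A clean fix is to choose the grid point separately in each group:
\[
G(\lambda)\;\ge\;-4A^2+\sum_{s\in\cS}\max_{y\in\cY_K}\big(-\langle\lambda_s,a(y)\rangle\big).
\]
Assume $z_m\in\cY_K$ and $z_M<A$. Then the vectors $a(y)$, $y\in\cY_K$, are all $M+1$ staircases, so the $s$-th term equals $\max\{0,-T_{s,1},\dots,-T_{s,M}\}$ with $T_{s,j}:=\sum_{m\ge j}\lambda_{s,m}$. On $\Delta_M$ we have $\sum_s T_{s,j}=0$ for each $j$. Hence all these terms vanish only if every $T_{s,j}=0$, i.e.\ $\lambda=0$. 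By positive homogeneity and compactness of the unit sphere of $\Delta_M$, the right-hand side is therefore at least $-4A^2+c\|\lambda\|$ for some $c>0$. This gives coercivity on $\Delta_M$, hence a minimizer.

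If some staircase is not realized on the grid (e.g.\ $z_M=A$), $G$ is constant along the corresponding directions of $\Delta_M$. That is your ``modulo'' caveat, and the same argument runs on the quotient.

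Alternatively, your remark about moving the $p_{s,m}$ becomes a proof if you exhibit predictors whose group CDF vectors are $c-\epsilon d_s$, around a strictly increasing $c\in(0,1)^M$. For unit $d\in\Delta_M$ this gives $G(td)\ge -4A^2+t\epsilon$, since $-\sum_{s,m}d_{s,m}(c_m-\epsilon d_{s,m})=\epsilon$ precisely because $\sum_s d_{s,m}=0$.
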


\subsection{$\partial{\mathcal{Z}}$-DP (partial demographic parity with borders constraint)}
We start by two couples of interest in the quantile/threshold space. Formally, we introduce $\boldsymbol{\ell}=(\ell_1,\ell_2)\in(0,1)^2$ and $\mathcal{Z}=(z_1,z_2) \in [-A,A]^2$. For a discretized predictor $f\in\cF_K$, the ultimate goal is to achieves $\partial{\mathcal{Z}}$-DP: 
\begin{eqnarray*}
\forall s\in\cS,
\ \forall t\in(z_1,z_2),~ 
\proba_s\big(f(X,s)\le t \big) = \proba\big(f(X,S)\le t \big),
\end{eqnarray*}
with additionally $\proba_s\big(f(X,s) = z_m\big) = \ell_m$ for $m = 1,2$ for all $s\in\cS$. This definition enforces equality of the CDF across groups in the interval $[z_1,z_2]$ while imposing the level of quantiles $\ell_1$ and $\ell_2$ at the borders of this interval. That is, we explicitly control the mass of the CDFs in the interval $[z_1,z_2]$ of interest.
The solution of the problem 
\begin{equation}
\label{eq:optZfairBorder}
f^\star_{\partial\mathcal{Z}\text{-fair}}
\in \argmin_{f\in\cF_K}\Big\{R(f): f \text{ satisfies } \partial{\mathcal{Z}}\text{-DP}\Big\},
\end{equation}
can be approached by combining ideas from the $(\boldsymbol{\ell},\mathcal{Z})$-DP framework and the above $\mathcal{Z}$-DP one. To this end, we discretize $[z_1,z_2]$ and define $\widetilde{\mathcal{Z}}^M_{z_1,z_2} = (\tilde{z}_1,\ldots,\tilde{z}_M)\in [-A,A]^M$ with $z_1 = \tilde{z}_0 < \tilde{z}_1 <\ldots < \tilde{z}_M < \tilde{z}_{M+1} = z_2$.

We consider a proxy of the above $\partial{\mathcal{Z}}$-DP fairness constraint and ask for $\proba_s\big(f(X,s)\le z \big) = \proba\big(f(X,S)\le z \big)$ only for thresholds $z\in \widetilde{\mathcal{Z}}^M_{z_1,z_2} $. Hence our goal becomes
\begin{equation}
\label{eq:optZfairBorder}
\tilde{f}^\star_{\partial\mathcal{Z}^M\text{-fair}}
\in \argmin_{f\in\cF_K}\Big\{R(f): f \text{ satisfies } \partial{\widetilde{\mathcal{Z}}^M_{z_1,z_2}}\text{-DP}\Big\},
\end{equation}
where a discretized prediction function $f\in \mathcal{F}_K$ is said $\partial{\widetilde{\mathcal{Z}}^M_{z_1,z_2}}$-DP fair if 
\begin{equation*}
    \forall s\in\cS,
\ \forall z\in\widetilde{\mathcal{Z}}^M_{z_1,z_2} ,~ 
\proba_s\big(f(X,s)\le z \big) = \proba\big(f(X,S)\le z \big)
\end{equation*}
and $\proba_s\big(f(X,s) = z_m\big) = \ell_m$ for $m = 1,2$.

\begin{theorem}[Proxy $\partial\mathcal{Z}$-DP discretized predictor]
\label{thm:proxyBorderPartialDP}
Under Assumption~\ref{ass:cont}, there exists $\lambda_1^\star \in \R^{|\cS|\times 2}$ and $\lambda_2^\star \in \Delta_M$ such that the predictor $\tilde{f}^\star_{ \partial \mathcal{Z}^M\text{-fair}}$ admits the pointwise form given by
\begin{multline*}
\tilde{f}^\star_{\partial\mathcal{Z}^M\text{-fair}}(x,s)
\in \argmin_{y\in\cY_K}
\Big\{\pi_s\,(y-{f}^\star(x,s))^2 \Big. \\
\Big. +  \langle (\lambda_1^\star)_s,\ a(y)\rangle +\langle (\lambda_2^\star)_s,\ \tilde{a}(y)\rangle\Big\}.
\end{multline*}
with $(\lambda_1^\star,\lambda_2^\star)$ being a minimizer of the dual objective
\begin{equation*}
(\lambda_1^\star,\lambda_2^\star) \in \argmin_{(\lambda_1 ,\lambda_2) \in\R^{|\cS|\times 2} \times \Delta_M}
\sum_{s\in\cS}   \left[\max_{y\in\cY_K}\Phi^{\widetilde{\mathcal{Z}}_{z_1,z_2}}_{s,\lambda}(X,y)\right].
\end{equation*}
with 
\begin{multline*}
\Phi^{\widetilde{\mathcal{Z}}_{z_1,z_2}}_{s,\lambda}(x,y):= -\pi_s\,(y-f^\star(x,s))^2 \\ 
- \langle (\lambda_1)_s,\ a_1(y)-\boldsymbol{\ell}\rangle -\langle (\lambda_2)_s,\ {a}_2(y)\rangle ,
\end{multline*}
and $a_1(y) = (\one\{y\le z_1\},\one\{y\le {z}_2\})\in\{0,1\}^2$ and ${a}_2(y)=(\one\{y\le \tilde{z}_1\},\dots,\one\{y\le \tilde{z}_M\})\in\{0,1\}^M$.
\end{theorem}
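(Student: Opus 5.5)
The plan is to reduce Theorem~\ref{thm:proxyBorderPartialDP} to the same Lagrangian-duality argument that gives Theorem~\ref{thm:optimLZFair} and Theorem~\ref{thm:OptimFair}, treating the two kinds of constraints together.

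\textbf{Reformulation.} The border constraints, read in the $(\boldsymbol{\ell},\mathcal Z)$ form $\proba_s(f(X,s)\le z_m)=\ell_m$, $m=1,2$, are exactly \eqref{eq:LZconstraint} with $M=2$. The interior constraints are exactly \eqref{eq:ZDP} on the thresholds $\widetilde{\mathcal Z}^M_{z_1,z_2}$. Both families are linear in the law of $f(X,s)$ given $S=s$. I would therefore lift the problem to randomized discretized predictors $f(x,s)\sim q(\cdot\mid x,s)$, a probability vector on $\cY_K$. Over this lifted set both the risk and the constraints are linear in $q$, and the feasible set is convex and weakly-$*$ compact.

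\textbf{Duality.} Introduce multipliers $(\lambda_1)_s\in\R^2$ for the border constraints and $(\lambda_2)_{s}\in\R^M$ for the $\mathcal Z$-DP constraints. As in Theorem~\ref{thm:OptimFair}, rewrite each $\mathcal Z$-DP constraint as $\proba_s(f\le\tilde z_j)-\sum_{s'}\pi_{s'}\proba_{s'}(f\le\tilde z_j)=0$. Exchanging sums shows that the marginal term can be absorbed by recentring the multipliers, which is exactly what restricts $\lambda_2$ to $\Delta_M$.

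The Lagrangian then splits over $s$ and, pointwise in $x$, reads
\[
\pi_s(y-f^\star(x,s))^2+\langle(\lambda_1)_s,a_1(y)-\boldsymbol\ell\rangle+\langle(\lambda_2)_s,a_2(y)\rangle ,
\]
after dropping the $\E[(Y-f^\star)^2]$ constant and normalizing the groupwise expectations. Minimizing over $q$ gives a Dirac at the pointwise argmin over $y\in\cY_K$. The dual function is therefore $-\sum_s\E_s[\max_y\Phi^{\widetilde{\mathcal Z}_{z_1,z_2}}_{s,\lambda}(X,y)]$, a concave function of $\lambda$, and its maximization is the stated minimization.

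Strong duality holds by a minimax theorem (Sion), using linearity in $q$, compactness in $q$, and concavity in $\lambda$. Alternatively, since there are finitely many linear constraints, Slater-type feasibility in the interior of the attainable moment set gives it directly. Existence of a dual minimizer $(\lambda_1^\star,\lambda_2^\star)$ follows from coercivity of the dual objective on $\R^{|\cS|\times 2}\times\Delta_M$. Coercivity holds because each constraint value is attainable with both signs of slack, so the objective grows linearly along every direction.

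\textbf{Main obstacle: derandomization via Assumption~\ref{ass:cont}.} I need to show that the optimal randomized $q$ can be taken deterministic and given by the displayed argmin. I would use first-order optimality of the convex dual. Its subgradient at $\lambda^\star$ contains the constraint residuals of the argmin rule, plus convex combinations over tie sets $\{x:\text{two grid values achieve the max}\}$.

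A tie between $y_k$ and $y_{k'}$ requires $\pi_s[(y_k-f^\star)^2-(y_{k'}-f^\star)^2]$ to equal a constant. This is affine in $f^\star(x,s)$ with nonzero slope $2\pi_s(y_{k'}-y_k)$, so it fixes $f^\star(x,s)$ to one value. By Assumption~\ref{ass:cont} each tie set is therefore $\proba_s$-null. Hence the dual objective is differentiable, and zero gradient at $\lambda^\star$ (projected onto $\Delta_M$ for $\lambda_2$, matching the marginal-form constraints) says precisely that the deterministic argmin rule satisfies all constraints. Complementary slackness then gives optimality.

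The one delicate point is the projection onto $\Delta_M$: I must check that the $\Delta_M$-constrained stationarity condition, $\nabla_{\lambda_2}$ equal to a vector constant across $s$, translates exactly into $\proba_s(f\le\tilde z_j)$ being equal across groups, hence equal to the marginal. I expect this to follow by taking the $\pi$-weighted average.
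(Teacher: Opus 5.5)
This is correct and is essentially the paper's argument, i.e.\ the proof of Theorem~\ref{thm:optimLZFair}, which the paper says carries over. You recentre the $\mathcal Z$-DP multipliers onto $\Delta_M$, minimize the Lagrangian pointwise over $\cY_K$, get a dual minimizer by coercivity, use Assumption~\ref{ass:cont} to make tie sets null so the dual is differentiable, read feasibility off stationarity (your $\pi$-weighted averaging handles the $\Delta_M$ block correctly), and conclude with $R(f)-R(f^\star)=\mathcal{L}(f,\lambda^\star)\ge\mathcal{L}(f^\star_{\lambda^\star},\lambda^\star)$ for feasible $f$. The lift to randomized predictors and Sion's theorem are superfluous because that final inequality is only weak duality, and coercivity needs joint strict feasibility of all constraints (e.g.\ $\ell_1<\ell_2$ and grid points between consecutive thresholds), not each constraint separately having both signs of slack.
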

Theorem~\ref{thm:proxyBorderPartialDP} exhibits a solution $\tilde{f}^\star_{\partial\mathcal{Z}^M\text{-fair}}$ that is a good proxy for $f^\star_{\partial\mathcal{Z}\text{-fair}}$ from Equation~\eqref{eq:optZfairBorder} when the grid $\widetilde{\mathcal{Z}}^M_{z_1,z_2}$ is good, \emph{e.g.,} a regular grid with large $M$. From the estimation perspective, building a data-driven method from $\tilde{f}^\star_{\partial\mathcal{Z}^M\text{-fair}}$ is performed as in the previous section -- a labeled dataset to estimateur the regression function $f^*$ and an unlabeled dataset to calibrate the partial unfairness.

The framework that we consider here resembles the one in~\cite{he2025enforcing} where the authors fits the CDFs across groups for a range of quantiles --- $[\ell_1,\ell_2]$ with our notation. The only difference is that we also specify the range of prediction values $[z_1,z_2]$. In terms of estimation strategy we also differ since we rely on post-processing while they consider in-processing approaches --- exploiting discretization as well.

\section{Numerical experiments}
\label{sec:experiments}

In this section, we validate our framework on both real and synthetic data designed to highlight the trade-off between predictive risk and distributional fairness constraints. We illustrate how our approach supports a continuum of interventions, from \emph{surgical} corrections at a few policy-relevant thresholds to \emph{localized} regional constraints, and we contrast these with a fully distribution-matching (``strong DP'') baseline.

We consider a regression setting where the sensitive group $S \in \{A, B\}$ influences the target $Y$ through both a location shift and a group-specific non-linearity.

\paragraph{Synthetic data and base learner.}
We generate $n = 4000$ samples $(X, S, Y)$ for each of $N_{sim} = 30$ simulations, with $X \sim \mathcal{U}(0, 10)^2$ and $\mathbb{P}(S=B) = 0.5$. The outcome follows $Y = f^*(X, S) + \varepsilon$, where $\varepsilon \sim \mathcal{N}(0, 5)$ and:
\begin{equation*}
    f^*(X, S) = 5X_1 + 3X_2 + 20 + \mathds{1}_{\{S=B\}} \left( 15 + 2(X_1 - 5)^2 \right).
\end{equation*}
This model induces a linear location shift ($+15$) and a non-linear structural polarization ($2(X_1-5)^2$) for group $B$. The unconstrained predictor $\hat{f}$ is estimated using a decision tree regressor (minimum 20 samples per leaf). All outcomes/predictions are clipped to $[-100, 100]$ and post-processing is evaluated on a regular grid of size $K=201$.

\paragraph{Compared methods.}
All methods below are applied as post-processing on top of the same base regressor $\hat f$:
(i) \textbf{Unconstrained}: the base regressor $\hat f$ (no fairness post-processing);
(ii) \textbf{$(\boldsymbol{\ell},\mathcal Z)$-fair}: enforce $F_{\hat f\mid S=s}(z_m)=\ell_m$ at a small number of prescribed pairs $(\ell_m,z_m)$ (Figure~\ref{fig:synthetic_data_lZfair});
(iii) \textbf{$\mathcal Z$-fair}: enforce partial distributional parity at a finite set of thresholds $\mathcal Z$ (Figure~\ref{fig:synthetic_data_Zfair}); (iv) \textbf{$\partial{\mathcal{Z}}$-DP} refereed to as {\it Z-fair, range} (Figure~\ref{fig:synthetic_data_Zfair});
(v) \textbf{Strong DP (full distribution matching)}: enforce parity on the whole grid, \emph{e.g.}\ by taking $\mathcal Z=\cY_K$ (Figure~\ref{fig:synthetic_data_Zfair}).
We emphasize that the last baseline represents the ``global'' end of the fairness spectrum, whereas $(\boldsymbol{\ell},\mathcal Z)$-fair and $\mathcal Z$-fair provide \emph{localized} alternatives.

\paragraph{Metrics.}
to evaluate the trade-off between predictive performance and group equity, we report three main metrics. First, we measure the \emph{price of fairness} via the root mean squared error (rmse) between the fair predictor $\hat{f}$ and the unconstrained optimal baseline $\hat{f}^*$:
$\mathrm{rmse} := \sqrt{\frac{1}{n_{\rm test}}\sum_{i=1}^{n_{\rm test}} \big(\hat f^*(X_i,S_i) - \hat f(X_i,S_i)\big)^2}.$
This metric represents the distortion risk $R_D(f)$ minimized in our theoretical results; by construction, the unconstrained model $\hat{f}^*$ yields an rmse of $0.00$. Second, we quantify the \emph{partial demographic parity violation} $\cU_{(\boldsymbol{\ell},\mathcal Z)}(\hat f)$ at the specific thresholds $\mathcal{Z}$ as defined in section~\ref{sec:algorithm}. Finally, we assess the entire outcome range using the kolmogorov-smirnov statistic: $
\mathrm{ks} := \max_{s,s' \in \mathcal{S}} \sup_{t \in [-A, A]} \big| F_{\hat f \mid S=s}(t) - F_{\hat f \mid S=s'}(t) \big|.$

While our optimization targets specific points in $\mathcal{Z}$, the $\mathrm{ks}$ metric allows us to evaluate the impact of these local constraints on the global alignment of the group-conditional predictive distributions.

\paragraph{Implementation details.}
Unless specified otherwise, we set $K=201$, choose $(\boldsymbol{\ell},\mathcal Z)$ based on quartiles ($25^{th}, 50^{th}$, and $75^{th}$ percentiles) of the unconstrained predictor $\hat{f}$ on a calibration set, and use projected subgradient descent to minimize $\widehat H$ in~\eqref{eq:emp_dual}.

\subsection{Focus on $({\boldsymbol{\ell}}, \mathcal{Z})$-fair prediction}
In this first setting, we illustrate the prescriptive capacity of our framework: a practitioner specifies both the thresholds $\mathcal{Z}$ and the target probabilities $\boldsymbol{\ell}$ \emph{a priori}, modeling scenarios where policy dictates acceptance rates or quotas at decision-relevant cutoffs.

Throughout, we take $M=3$ and $\boldsymbol{\ell}=(0.25,0.50,0.75)$ and consider three choices of $\mathcal Z$ (see  Figure~\ref{fig:synthetic_data_lZfair}):
\\
$\bullet$ \textbf{Global}: $\mathcal Z$ is set to the marginal quartiles of the unconstrained scores $\hat f(X,S)$ on the calibration set. This enforces agreement at common thresholds shared across groups.\\
$\bullet$ \textbf{Target-A}: $\mathcal Z$ is set to the quartiles of $\hat f(X,A)$. Since $F_{\hat f\mid S=A}(z_m)=\ell_m$ holds by construction, the constraints effectively force group $B$ to match group $A$ at these thresholds.\\
$\bullet$ \textbf{Target-B}: symmetric choice with $\mathcal Z$ set to the quartiles of $\hat f(X,B)$.

Figure~\ref{fig:synthetic_data_lZfair} shows that these localized constraints can substantially reduce disparities at the prescribed cutoffs while preserving much of the predictive structure away from them. As expected, more prescriptive choices (\emph{e.g.}\ targeting another group at fixed cutoffs) may increase risk when the specified targets are far from the group's natural score distribution.

\begin{figure*}[!htbp]
    \centering
    \includegraphics[width=0.95\textwidth]{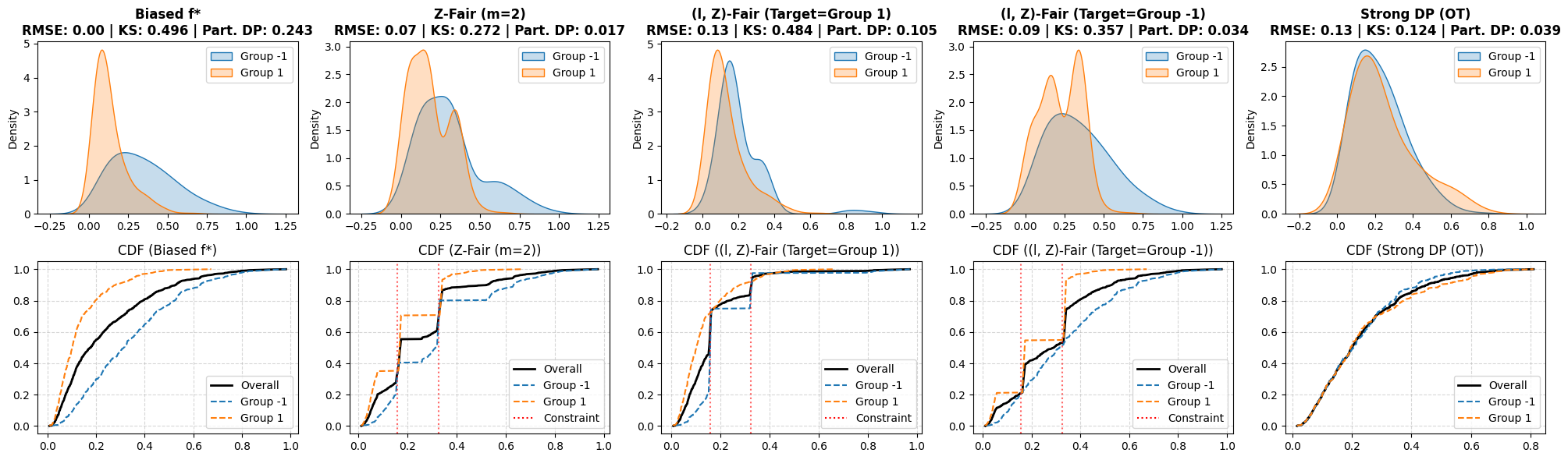} 
    \caption{Comparison of localized constraints and full distribution matching on CRIME data using a LightGBM model with default scikit-learn parameters.}
    \label{fig:qfair_crime}
\end{figure*}

\subsection{Extension to $\mathcal{Z}$-fair and $\partial\mathcal{Z}$-fair prediction}
We now consider $\mathcal Z$-fair constraints, which enforce \emph{partial} distributional parity only at a finite set of thresholds (or over a selected region). Figure~\ref{fig:synthetic_data_Zfair} contrasts enforcing parity at a small number of thresholds (``$Z$-fair, $M=3$'') with a localized range constraint (``$Z$-fair, range'') and with the global strong-DP baseline (full grid matching). Enforcing parity at a few thresholds reduces group differences \emph{where constrained} while allowing more flexibility elsewhere; the range constraint (3rd column) further concentrates the correction within a chosen interval, leaving the tails comparatively less affected. In contrast, full distribution matching (last column) yields near-complete overlap of predictive distributions but can substantially distort predictions.
We refer the reader to Appendix~\ref{app:additionalNum} for additional numerical results relying on the evolution of risk/unfairness \emph{w.r.t.} $M$. 



\subsection{Real-data illustration}
Finally, Figure~\ref{fig:qfair_crime} reproduces the same qualitative behavior on the CRIME dataset (we refer to Appendix~\ref{app:additionalNum} for a description of the dataset) using a LightGBM base regressor (default scikit-learn parameters): localized constraints reduce distributional gaps around selected thresholds while typically incurring a smaller performance penalty than full distribution matching.

\subsection{Overall conclusion.} Our numerical study, both on synthetic and real data highlights that by enforcing constraints only at a finite number of thresholds---or within a selected region of the score distribution---our approach enables \emph{localized} interventions that can be tuned to policy-relevant cutoffs while limiting unnecessary distortion elsewhere. 
These different localized interventions that we considered yield a favorable accuracy--fairness trade-off compared to global matching baselines (OT matching) and confirm our theory.

\begin{figure*}
    \centering
    \includegraphics[width=0.9\textwidth]{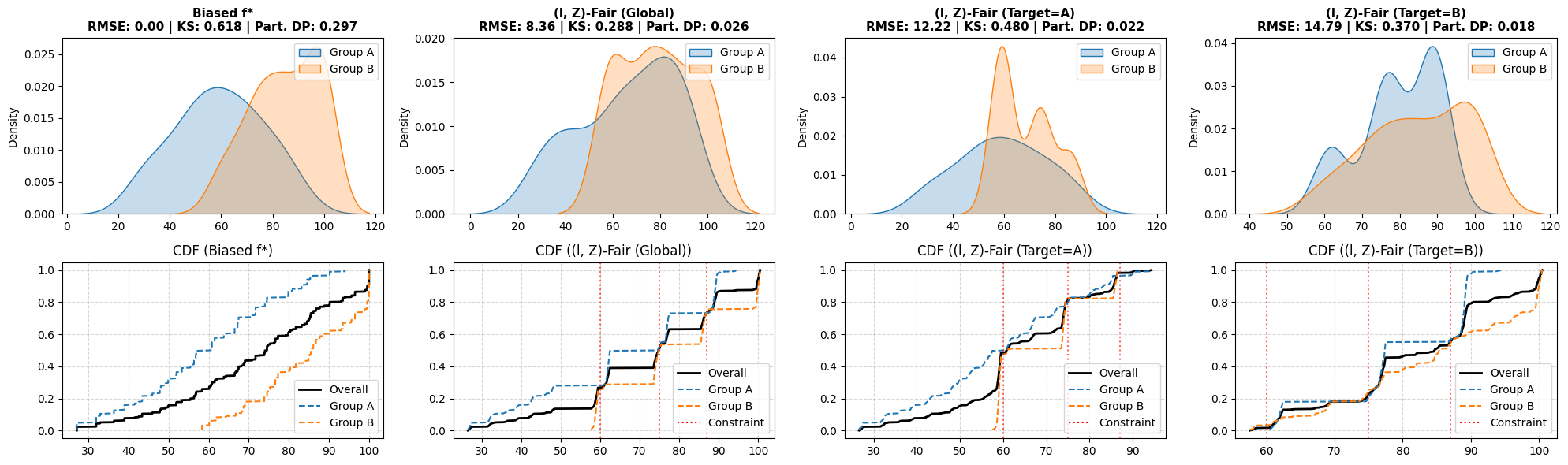}
    \caption{Analysis of $(\ell,\mathcal Z)$-fair methods on synthetic data. We compare three prescriptions for $\mathcal Z$ (Global, Target-A, Target-B) with $M=3$ and $\boldsymbol{\ell}=(0.25,0.50,0.75)$.}
    \label{fig:synthetic_data_lZfair}
\end{figure*}

\begin{figure*}
    \centering
    \includegraphics[width=0.9\textwidth]{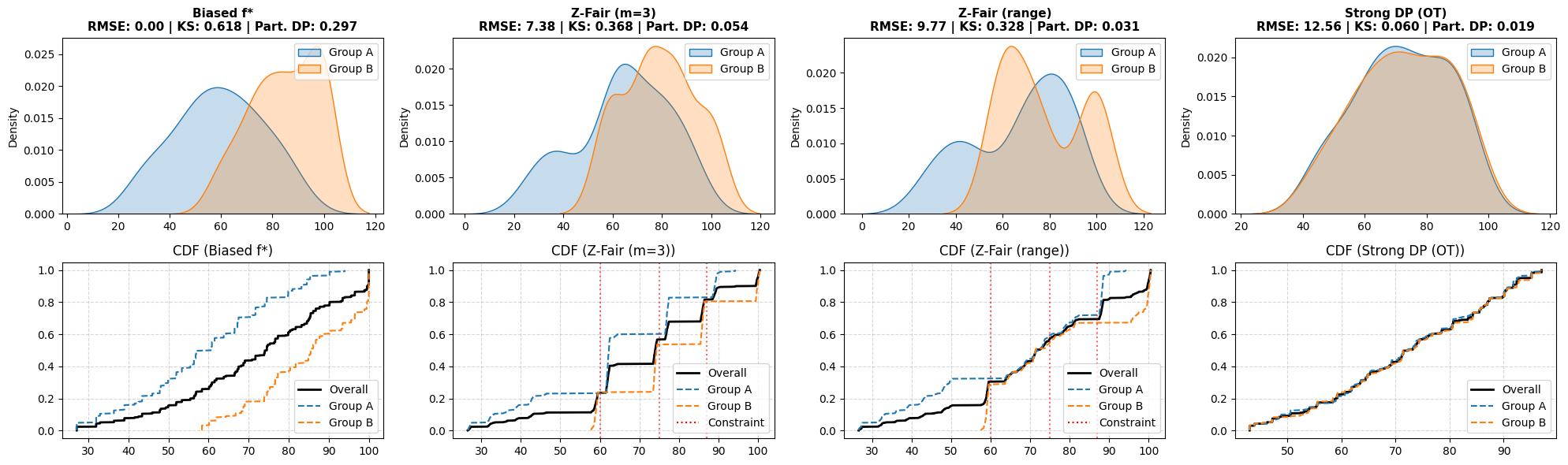} 
    \caption{Analysis of $\mathcal Z$-fair methods on synthetic data. We compare enforcing parity at $M=3$ thresholds, enforcing parity only over a selected range, and full distribution matching (strong DP, full grid).}
    \label{fig:synthetic_data_Zfair}
\end{figure*}


\clearpage

\section*{Impact Statement}

This work contributes to the growing literature on algorithmic fairness in regression by proposing quantile- and threshold-based relaxations of demographic parity. By allowing stakeholders to enforce parity only at selected parts of the predictive distribution (e.g., medians, upper quantiles, or operational cutoffs), the proposed framework can enable more transparent and policy-aligned fairness requirements than full distributional parity, while reducing unnecessary accuracy loss. Potential benefits include improved accountability in high-stakes scoring applications (credit, hiring, risk assessment) and clearer communication of fairness constraints to non-technical decision makers.

At the same time, this approach may have negative societal impacts if misused. First, selecting the levels/thresholds $(\boldsymbol{\ell},\mathcal{Z})$ is a normative choice: poorly chosen targets may hide disparities outside the monitored region of the distribution, or may be used as a superficial ``fairness compliance'' layer without addressing structural harms. Second, the method relies on access to a sensitive attribute $S$ (or reliable proxies) during calibration; collecting, storing, or using such attributes can raise privacy and governance concerns, and may be restricted by regulation or institutional policy. Third, because the procedure is a post-processing step, it can alter score calibration or ranking near cutoffs; if downstream decisions are highly sensitive to small score changes, this may create unexpected incentives or discontinuities.

We emphasize that quantile-based constraints should be deployed only with careful stakeholder consultation and domain expertise. In practice, we recommend: 
(i) reporting the chosen $(\boldsymbol{\ell},\mathcal{Z})$ and conducting sensitivity analyses to alternative choices;
(ii) complementing partial distributional parity with additional diagnostics (e.g., error disparities,
tail-risk metrics, subgroup analyses) to reduce the risk of ``fairness gerrymandering'';
(iii) documenting data collection and privacy safeguards for sensitive attributes; and
(iv) monitoring post-deployment performance to detect distribution shift or new disparities.

Overall, the proposed methodology is intended to provide a tractable and interpretable tool for reducing group-level distributional disparities in regression, but it does not eliminate the need for broader organizational, legal, and societal oversight when automated predictions influence real-world outcomes.

\bibliography{biblio}
\bibliographystyle{icml2026}

\newpage
\appendix
\onecolumn

\begin{center}
\bf{\Large Supplementary Materials}
\end{center}

\vspace*{0.25cm}

\paragraph{Appendix overview.} The first section (Section~\ref{app:additionalNum}) presents additional numerical results complementing those in the main paper. The subsequent sections (Section~\ref{app:proofsofAll}) provide proofs of the theoretical results.

\section{Numerical considerations}
\label{app:additionalNum}

\paragraph{Data description.} The main dataset we consider is  \texttt{CRIME} that contains socio-economic, law enforcement, and crime data about communities in the US with 1994 examples~\cite{redmond2002data}. 
The task is to predict the violent crime rate per population. We consider race-related attributes, in particular the proportion of African-American residents, as sensitive attributes, which obtains 1,032 instances for 
$s=-1$ and 962 instances for 
$s=1$. We split the data into three sets (60\% training, 20\%
hold-out and 20\% unlabeled).

\paragraph{Additional numerical study.}Figure~\ref{fig:z_fair_synthetic_diagram} summarizes the resulting fairness--accuracy trade-off: as constraints become more global (more thresholds and/or full-grid matching), distributional discrepancies decrease (lower $\mathrm{KS}$ and lower constraint violation) at the cost of increased predictive error, whereas localized constraints provide intermediate operating points.

\begin{figure}[!htbp]
    \centering
    \includegraphics[width=0.9\linewidth]{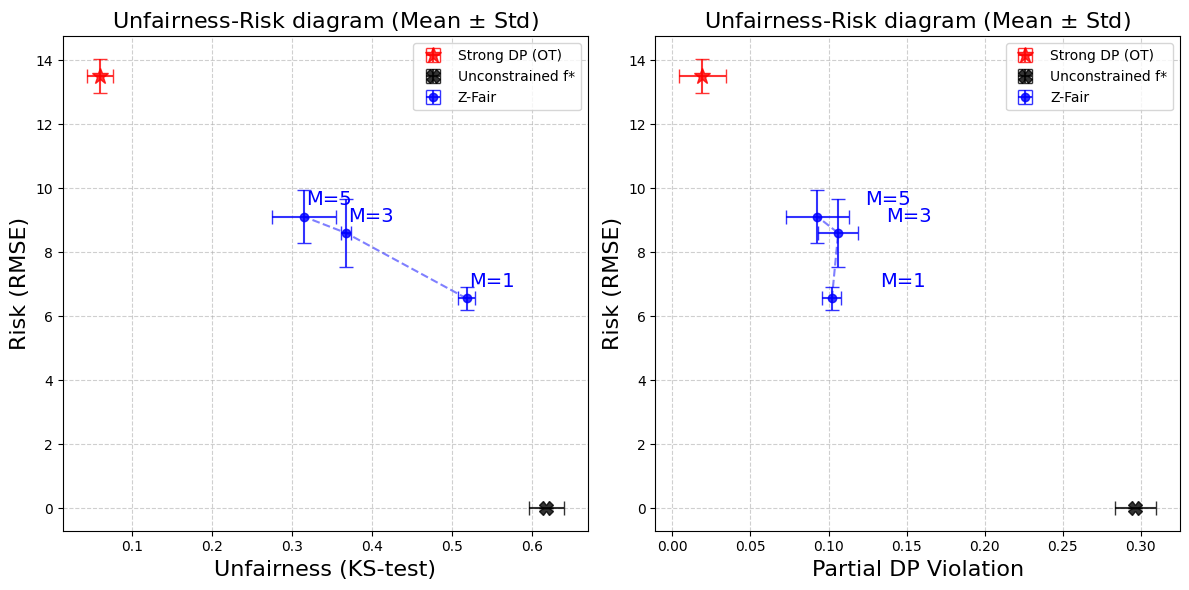} 
    \caption{Fairness--accuracy trade-off on synthetic data for increasingly global constraints.}
    \label{fig:z_fair_synthetic_diagram}
\end{figure}

\section{Proofs of main results}
\label{app:proofsofAll}

This appendix is dedicated to the proofs of the theoretical results. Notice that we omit the proof of Theorems~\ref{thm:OptimFair} and~\ref{thm:proxyBorderPartialDP} since it relies on similar arguments as those in Theorem~\ref{thm:optimLZFair}.

\paragraph{Notation.}
Conditionally on $\cD_n$, define for each $s\in\cS$ the conditional law
$\proba_s(\cdot)=\proba(\cdot\mid S=s)$ and its empirical version based on $\cD_N$,
\[
\hat\proba_s(A):=\frac{1}{N_s}\sum_{i\in I_s}\one\{X'_i\in A\},
\text{ where } I_s:=\{i\in[N]:S'_i=s\}.
\]

\subsection{Proof of Section~\ref{sec:statSetting}}

\begin{proof}[Proof of Theorem~\ref{thm:optimLZFair}]

First, we observe that our minimization problem can reformulated as follows
\begin{equation*}
f^*_{(\boldsymbol{\ell}, \mathcal{Z})-{\rm fair}} \in \argmin_{f \in \mathcal{F}_K} \left\{R(f)-R(f^*) ,   f  \;\;  {\rm is} \;\;    (\boldsymbol{\ell}, \mathcal{Z})-{\rm fair}\right\}.
\end{equation*}

We consider the Lagrangian $\mathcal{L}$ associated to our optimization problem. Let $f \in \mathcal{F}_K$, and $\boldsymbol{\lambda} = (\lambda)_{s \in \mathcal{S},m \in [M]}$, we have that since $R(f)-R(f^*) = \mathbb{E}\left[(f(X,S)-f^*(X,S))^2\right]$
\begin{equation*}
\mathcal{L}\left(f, \boldsymbol{\lambda}\right)
= \mathbb{E}\left[f^*(X,S)-f(X,S))^2\right] + \sum_{s\in \mathcal{S}}\sum_{m \in [M]} 
\lambda_{s,m}\left(\mathbb{P}_{s}\left(f(X,S) \leq z_m\right)-\ell_m\right).
\end{equation*}
We observe that 
\begin{equation*}
\mathcal{L}\left(f, \boldsymbol{\lambda}\right) = 
\sum_{s \in \mathcal{S}}  \mathbb{E}_{s}\left[\pi_s \left(f^*(X,S) - f(X,S)^2\right)
+ \sum_{m \in [M]} \lambda_{s,m} \one_{\{f(X,S) \leq z_m\}}\right] - \sum_{s \in \mathcal{S}}\sum_{m \in [M]} \lambda_{s,m} \ell_m.
\end{equation*}
Now since $f \in \mathcal{F}_K$, we have that
\begin{multline}
\label{eq:eqOptiFair1}
\mathcal{L}\left(f, \boldsymbol{\lambda}\right) = 
\sum_{s \in \mathcal{S}, k \in [K]}  \mathbb{E}_{s}\left[\left(\pi_s \left(f^*(X,S) - y_k\right)^2
+ \sum_{m \in [M]} \lambda_{s,m} \one_{\{y_k \leq z_m\}}\right)\one_{\{f(X,S) = y_k\}}\right] - \sum_{s \in\mathcal{S}, m \in [M]} \lambda_{s,m} \ell_m. 
\end{multline}
Hence, we observe that
\begin{equation*}
f_{\lambda}^* \in \argmin_{f \in \mathcal{F}_K} \mathcal{L}\left(f, \boldsymbol{\lambda}\right),   
\end{equation*}
is characterized pointwise as
\begin{equation*}
f_{\lambd}^*(x,s) = \argmin_{k \in [K]} \pi_s(f^*(x,s)-y_k)^2+ \sum_{m \in [M]} \lambda_{s,m} \one_{\{y_k \leq z_m\}}.
\end{equation*}
Furthermore, we also have
\begin{eqnarray}
\label{eq:eqOptiFair2}
\mathcal{L}\left(f^*_{\lambda}, \lambda\right)
& = & \sum_{s \in \mathcal{S}} \mathbb{E}_{s} \left[\min_{k \in [K]} \left(\pi_s(f^*(x,s)-y_k)^2+ \sum_{m \in [M]} \lambda_{s,m} \one_{\{y_k \leq z_m\}} \right) \right] - \sum_{s \in\mathcal{S}, m \in [M]} \lambda_{s,m} \ell_m  \nonumber \\ 
 & =  & - \left(\sum_{s \in \mathcal{S}} \mathbb{E}_{s} \left[\max_{k \in [K]} \left( -\pi_s(f^*(x,s)-y_k)^2- \sum_{m \in [M]} \lambda_{s,m} \one_{\{y_k \leq z_m\}}\right) \right] + \sum_{s \in\mathcal{S}, m \in [M]} \lambda_{s,m}\ell_m \right).
\end{eqnarray}
Therefore $H : \boldsymbol{\lambda} \mapsto -\mathcal{L}(f^*_{\boldsymbol{\lambda}}, \boldsymbol{\lambda})$ is convex {\it w.r.t.} $\lambda$. Besides, $H$ is coercive. Indeed,
\begin{equation*}
H(\boldsymbol{\lambda}) \geq \max_{k \in [K]} \sum_{s \in \mathcal{S}}\mathbb{E}_{s}\left[-\pi_s(f^*(X,S)- y_k)^2 - \sum_{m \in [M]} \lambda_{s,m} \one_{\{y_k \leq z_m\}}\right]  + \sum_{s \in \mathcal{S}}\sum_{m \in [M]} \lambda_{s,m} \ell_m.
\end{equation*}
Since $|f^*(X,S)| \leq A$ {\it a.s.}, we deduce
\begin{equation*}
H(\boldsymbol{\lambda}) \geq -2 A^2 + \max_{k \in [K]} \sum_{s \in \mathcal{S}}\sum_{m \in [M]} \lambda_{s,m}\left(l_m- \one_{\{y_k \leq z_m\}}\right).
\end{equation*}
From the above inequality, we deduce since for each $m \in [M]$, $y_0 < z_m < y_K$, that 
\begin{equation*} 
H(\boldsymbol{\lambda}) \rightarrow + \infty,    {\rm as},    \left\|\boldsymbol{\lambda}\right\| \rightarrow + \infty. 
\end{equation*}
Therefore, $H$ admits a global minimizer.
Then, we consider the predictor $f^*_{\boldsymbol{\lambda}^*}$ with
\begin{equation*}
\boldsymbol{\lambda}^* \argmin_{\boldsymbol{\lambda}} -\mathcal{L}\left(f^*_{\boldsymbol{\lambda}}, \boldsymbol{\lambda}\right).
\end{equation*}
Under Assumption~\ref{ass:cont}, we have that the function $H$ is differentiable {\it w.r.t.} $\boldsymbol{\lambda}$, and
\begin{equation*}
\partial{H}_{s,m} = -\mathbb{P}_{s}(f^*_{\boldsymbol{\lambda}}(X,S) \leq z_m) + \ell_m.
\end{equation*}
Therefore the first order condition for the minimization over $\boldsymbol{\lambda}$ shows that for $s \in \mathcal{S}$, and $m \in [M]$
\begin{equation*}
\mathbb{P}_{s}(f^*_{\boldsymbol{\lambda}^*}(X,S) \leq z_m) = \ell_m,   
\end{equation*}
which implies that $f^*_{\boldsymbol{\lambda}^*}$ is $(\boldsymbol{\ell}, \mathcal{Z})$-fair.
Finally, we observe that if $f \in \mathcal{F}_K$ is a predictor that is $(\boldsymbol{\ell}, \mathcal{Z}$-fair, we have that 
\begin{equation*}
R(f) -R(f^*) = \mathcal{L}\left(f, \boldsymbol{\lambda}^*\right) \geq \mathcal{L}(f^*_{\boldsymbol{\lambda}^*}, \boldsymbol{\lambda}^*)  = R(f^*_{\boldsymbol{\lambda}^*})-R(f^*).
\end{equation*}
From the above inequality, we deduce that
 $f^*_{\boldsymbol{\lambda}^*} \in \argmin_{f \in \mathcal{F}_K} \left\{R(f)-R(f^*) ,   f    {\rm is}    (\boldsymbol{\ell}, \mathcal{Z})-{\rm fair}\right\}$.
\end{proof}

\begin{proof}[Proof of Proposition~\ref{prop:discrCost}]

First of all, since for each $m \in [M]$, $z_m \in [-A,A]$, we can assume that $\tilde{f}(X,S) \in [-A,A]$.

We define the predictor $T(\tilde{f})$ that is the approximation of $\tilde{f}$ over the set $\mathcal{F}_K$. 
\begin{equation*}
T(\tilde{f})  = y_k ,   {\rm if },   f^*(X,S) \in (y_{k-1}, y_k].     
\end{equation*}
for each $m \in [M]$, since $z_m \in \mathcal{F}_K$,  we have that for each $s \in \mathcal{S}, m \in [M]$
\begin{equation*}
\mathbb{P}_{s}\left(\tilde{f}(X,S) \leq z_m\right) = \mathbb{P}_{s}\left(T(\tilde{f}(X,S) \leq z_m\right) = \ell_m.    
\end{equation*}
Therefore, the predictor $T(\tilde{f})$ is $(\boldsymbol{\ell}, \mathcal{Z})$-fair. Hence 
\begin{equation*}
R(f^*_{(\boldsymbol{\ell}, \mathcal{Z})-{\rm fair}}) \leq R(T(\tilde{f})) =  R(T(\tilde{f})) - R(\tilde{f}) + R(\tilde{f}).   
\end{equation*}
Now, we study the term $R(T(\tilde{f})) - R(\tilde{f})$ in the {\it r.h.s.} of the above inequality.
We have that 
\begin{multline*}
0 \leq  R(T(\tilde{f})) - R(\tilde{f})  = \mathbb{E}\left[2Y\left(\tilde{f}(X,S) - T(\tilde{f}(X,S)\right) + T^2(\tilde{f})(X,S) - \tilde{f}^2(X,S)\right]\\\ = \mathbb{E}\left[(2Y -\tilde{f}(X,S)- T(\tilde{f}(X,S))) \left(\tilde{f}(X,S) - T(\tilde{f}(X,S)\right)\right] 
\end{multline*}
Now since $Y \in[-A,A]$, $T(\tilde{f})(X,S) \in [-A,A]$, and $\tilde{f}(X,S) \in[-A,A]$, we deduce that
\begin{equation*}
 R(T(\tilde{f})) - R(\tilde{f}) \leq 3A \mathbb{E}\left[\left|\tilde{f}(X,S) - T(\tilde{f}(X,S)\right|\right] \leq 3A \max_k \left|y_k-y_{k-1}\right|\leq \dfrac{6A^2}{K-1},
\end{equation*}
which yields the desired result.
\end{proof}

\subsection{Proof of Section~\ref{sec:algorithm}}\label{app:B:section:4}

\subsection*{Proof of Theorem~\ref{thm:unfairnessRate}}
\label{app:proof:unfairnessRate}


For each $k \in [K]$, and $(x,s) \in \mathbb{R}^d\times \mathcal{S}$ we define 
\begin{equation*}
\hat{h}_k(\boldsymbol{\lambda},(x,s)) = \left(-\sum_{m \in[M]} \lambda_{s,m}\left((\one{\left\{y_k \leq z_m \right\}}-\ell_m\right)\right)-\hat{\pi}_s \left(y_k-\bar{f}(x,s)\right)^2.  
\end{equation*}
For each $k \in [K]$, we also define $k(m):= \max \left\{k: y_k \leq z_m \right\}$



Let $s \in \mathcal{S}, m \in [M]$. For $i \in [N]$, we introduce the events 
\begin{equation*}
A_k = \left\{\forall j \neq k,  \hat{h}_j(\lambd,(X,S)) < \hat{h}_k(\lambd,(X,S))\right\},
\end{equation*}
and 
\begin{equation*}
B_k = \left\{\forall j \neq k,   \hat{h}_j(\lambd,(X,S)) \leq  \hat{h}_k(\lambd,(X,S)),    \exists j \neq k,     \hat{h}_j(\lambd,(X,S)) = \hat{h}_k(\lambd,(X,S))  \right\}.
\end{equation*}
We have that 
\begin{equation}
\label{eq:eqUnfair0}
\widehat{\mathbb{P}}_{s}\left(\hat{f}_{(\boldsymbol{\ell}, \mathcal{Z})-{\rm fair}}(X,S) \leq z_m\right)  = \sum_{k \leq k(m)}\widehat{\mathbb{P}}_{s}(A_k) + \widehat{\mathbb{P}}_{s}(B_k).
\end{equation}
Let $g_i \in \partial_{s,m} \max_{k \in [K]} \hat{h}(\lambd, (X,S))$ for $k \in [K]$,  on the event $\left\{\hat{f}_{(\boldsymbol{\ell}, \mathcal{Z})-{\rm fair}}(X,S) = y_k\right\}$, we have
\begin{multline*}
g_i = \partial_{s,m}\hat{h}_k(\lambd, (X,S)) \one_{\left\{A_k\right\}} + \one_{\left\{B_k\right\} } \sum_{j \in [K]} \alpha_{s,m,j}(X,S) \partial_{s,m} \hat{h}_j(\lambda, (X,S)) \one_{\left\{\hat{h}_k(\lambd, (X,S)) = \hat{h}_j(\lambd,(X,S))\right\}},
\end{multline*}
with $(\alpha_{j,s,m}(X,S)_{j \in [K]} \in [0,1]^{K}$ that satisfy
\begin{equation*}
\sum_{j \in [K]} \alpha_{s,m,j}(X,S)  \one_{\left\{\hat{h}_k(\lambd, (X,S)) = \hat{h}_j(\lambd,(X,S))\right\}} = 1 \;\; a.s. 
\end{equation*}
Following similar arguments as in Proof of Theorem~\ref{thm:optimLZFair}, we have that the function $\hat{H}$ is coercive and then admits a minimizer. Since $\hat{\lambd}$ is defined as
\begin{equation*}
\hat{\lambd} \in \argmin_{\lambd} \hat{H}(\lambd) = \sum_{s \in \mathcal{S}} \hat{\mathbb{E}}_{s}\left[\max_{k \in [K]} \hat{h}_k(\lambd,(X,S))\right],
\end{equation*}
Since $\sum_{k \in [K]} \one_{\left\{\hat{f}_{(\boldsymbol{\ell}, \mathcal{Z})-{\rm fair}}(X,S) = y_k\right\}} = 1$,
we deduce from the first order condition that
\begin{equation*}
l_m - \sum_{k \leq k(m)} \widehat{\mathbb{P}}_{s}\left({A_k}\right)  -\sum_{k \leq k(m)}   \widehat{\mathbb{E}}_{s}\left[ \one_{B_k}\sum_{j \leq j(m)} \alpha_{s,m,j}(X,S)\one_{\left\{\hat{h}_k(\lambd, (X_i^s,S_i)) = \hat{h}_j(\lambd,(X,S))\right\}}\right]  = 0
\end{equation*}
Hence, from the above equation, and~\eqref{eq:eqUnfair0} we deduce that
\begin{multline*}
\left|\mathbb{P}_{s}\left(\hat{f}_{(\boldsymbol{\ell}, \mathcal{Z})-{\rm fair}}(X,S) \leq z_m\right) -\ell_m\right| \leq  \left|\left(\mathbb{P}_{s}-\hat{\mathbb{P}}_{s}\right)\left(\hat{f}_{(\boldsymbol{\ell}, \mathcal{Z})-{\rm fair}}(X,S) \leq z_m\right) \right| \\
+ \sum_{k \leq k(m)} \widehat{\mathbb{P}}_{s}\left(\exists j \neq k,     \hat{h}_j(\lambd,(X,S)) = \hat{h}_k(\lambd,(X,S))\right).
\end{multline*}
Therefore, it yields
\begin{equation*}
 \mathcal{U}\left(\hat{f}_{(\boldsymbol{\ell}, \mathcal{Z})-{\rm fair}}\right) \leq  \sum_{s \in \mathcal{S}} \sup_{t \in \mathbb{R}}  \left|\left(\mathbb{P}_{s}-\hat{\mathbb{P}}_{s}\right)\left(\hat{f}_{(\boldsymbol{\ell}, \mathcal{Z})-{\rm fair}}(X,S) \leq t\right) \right| + \sum_{k \in [K]}  \widehat{\mathbb{P}}_{s}\left(\exists j \neq k,     \hat{h}_j(\lambd,(X,S)) = \hat{h}_k(\lambd,(X,S))\right).
\end{equation*}
Now, conditional on $\mathcal{D}_n$, applying using the Dvoretzky--Kiefer--Wolfowitz inequality, \cite{dvoretzky1956dkw} with Massart's sharp constant \cite{massart1990dkw}, and Lemma B.8 in~\citep{chzhen2020fair}, we obtain that ,
\begin{equation*}
\label{eq:eqUnfair2}
\mathbb{E}\left[  \mathcal{U}\left(\hat{f}_{(\boldsymbol{\ell}, \mathcal{Z})-{\rm fair}}\right)\right]  \leq C \sum_{s \in \mathcal{S}}\mathbb{E}\left[\left(\sqrt{\dfrac{1}{N_s}} + \dfrac{K^2}{N_s}\right)\right]. 
\end{equation*}
Finally, using the Lemma~4.1 in~\citep{GyorfyBook2002}, we get the desired result.

\subsection*{Proof of Theorem~\ref{thm:unfairness_hp}}
\label{app:proof:unfairness_hp}

Recall that $\hat f_{(\boldsymbol{\ell},\mathcal Z)\text{-fair}}$ takes values in the finite grid $\cY_K=\{y_1,\dots,y_K\}$.

\paragraph{Step 1: empirical constraints are (essentially) satisfied.}
We claim that, conditionally on $\cD_n$, for all $s\in\cS$ and $m\in[M]$,
\begin{equation}
\label{eq:emp_constraints}
\hat\proba_s\!\Big(\hat f_{(\boldsymbol{\ell},\mathcal Z)\text{-fair}}(X,s)\le z_m\Big)=\ell_m + (\text{tie terms}),
\qquad\text{a.s.}
\end{equation}
with \text{tie terms} of order of $K^2/N$.
Indeed, as in the previous proof, $\hat\lambda$ minimizes the convex objective $\widehat H(\lambda)$ (Eq.~(6) in the main text), hence $0\in\partial \widehat H(\hat\lambda)$. As in the proof of Theorem~\ref{thm:unfairnessRate}, one can compute a subgradient component-wise and obtain
\[
0 \in \partial_{s,m}\widehat H(\hat\lambda)
= \ell_m-\hat\proba_s\!\Big(\hat f_{(\boldsymbol{\ell},\mathcal Z)\text{-fair}}(X,s)\le z_m\Big)
+\text{(tie terms)}.
\]
By the dithering construction, conditionally on $\cD_n$ the random variable $\bar f(X,s)$ has a continuous distribution, which implies that ties in $\displaystyle\argmin_{y\in\cY_K}$ occur with probability zero. Hence the tie terms vanish a.s., yielding \eqref{eq:emp_constraints}.

\paragraph{Step 2: reduce the population violation to a generalization gap.}
Fix $s\in\cS$ and $m\in[M]$. Using \eqref{eq:emp_constraints},
\[
\proba_s\!\Big(\hat f_{(\boldsymbol{\ell},\mathcal Z)\text{-fair}}(X,s)\le z_m\Big)-\ell_m
=
\Big(\proba_s-\hat\proba_s\Big)\!\Big(\hat f_{(\boldsymbol{\ell},\mathcal Z)\text{-fair}}(X,s)\le z_m\Big) + \text{(tie terms)}.
\]


Taking the maximum over $m$ and $s$ yields
\[
\cU_{(\boldsymbol{\ell},\mathcal Z)}\!\Big(\hat f_{(\boldsymbol{\ell},\mathcal Z)\text{-fair}}\Big)
\le \max_{s\in\cS} \sup_{t \in \mathbb{R} } \Big(\proba_s-\hat\proba_s\Big)\!\Big(\hat f_{(\boldsymbol{\ell},\mathcal Z)\text{-fair}}(X,s)\le t\Big) + \text{(tie terms)}.
\]

\paragraph{Step 3: concentration of empirical CDF.}
Conditionally on $\cD_n$, and for a fixed $s$,


we can bound the deviation $\sup_{t\in\R}\big|\hat F_{s}(t)-F_{s}(t)\big|$ using the Dvoretzky--Kiefer--Wolfowitz inequality, \cite{dvoretzky1956dkw}. Using Massart's sharp constant \cite{massart1990dkw}, with probability at least $1-\delta$ we have 
\begin{equation*}
\sup_t |\hat F_s(t)-F_s(t)| \le \sqrt{\frac{\log(2/\delta)}{2N_s}}
\end{equation*}

Using the group-mass assumption $\pi_{\min}>0$ and a standard concentration bound on $N_s$ (e.g., Hoeffding for binomials, \cite{hoeffding1963}), we have on an event of probability at least $1-\delta/2$ that $N_s\ge \frac{1}{2}N\pi_{\min}$ for all $s\in\cS$. Combining and absorbing $\log|\cS|$ and $\log 2$
into constants yields: with probability at least $1-\delta$,
\begin{equation*}
\max_{s \in \mathcal{S}}\left|\sup_{t \in \mathbb{R} } \Big(\proba_s-\hat\proba_s\Big)\!\Big(\hat f_{(\boldsymbol{\ell},\mathcal Z)\text{-fair}}(X,s)\le t\Big)
\right|\leq \mathcal{C}_{\mathcal{S}} \left(\sqrt{\frac{1}{N}}+\sqrt{\frac{\log(1/\delta)}{N}}\right).
\end{equation*}
The above implies the claimed

\[
\cU_{(\boldsymbol{\ell},\mathcal Z)}\!\Big(\hat f_{(\boldsymbol{\ell},\mathcal Z)\text{-fair}}\Big)
\le C_\cS M \,\left(\sqrt{\frac{1}{N}}+\dfrac{K^2}{N}+\sqrt{\frac{\log(1/\delta)}{N}}\right),
\]
which concludes the proof of Theorem~\ref{thm:unfairness_hp}.
\qed


\subsection*{Proof of Theorem~\ref{thm:risk}}
\label{app:proof:risk}

First, for each $\lambda = (\lambda_{s,m})_{s \in \mathcal{S}, m\in [M]} \in \mathbb{R}^{\mathcal{S}M}$, we introduce the predictor $f^*_{\lambda}$ defined as
\begin{equation*}
f^*_{\lambda} \in \arg\min_{f \in \mathcal{F}_K} \mathcal{R}_{\lambda}(f).    
\end{equation*}
It is important to note that the Lagrange multiplier $\lambda^*$  is characterized as 
\begin{equation*}
\lambda^* \in \arg\max_{\lambda \in \mathbb{R}^{\mathcal{S}M}} \mathcal{R}_{\lambda}\left(f_{\lambda}\right).    
\end{equation*}

We start with the following decomposition
\begin{multline}
\label{eq:eqRisk1}
\cR_{\lambda^\star}\left(\hat f_{(\boldsymbol{\ell},\mathcal{Z})\text{-fair}}\right)  - \cR_{\lambda^\star}\left( f^*_{(\boldsymbol{\ell},\mathcal{Z})\text{-fair}}\right)= \\ 
\cR_{\lambda^\star}\left(\hat f_{(\boldsymbol{\ell},\mathcal{Z})\text{-fair}}\right)- \cR_{\hat \lambda}\left(\hat f_{(\boldsymbol{\ell},\mathcal{Z})\text{-fair}}\right) + \cR_{\hat \lambda}\left(\hat f_{(\boldsymbol{\ell},\mathcal{Z})\text{-fair}}\right) -  \cR_{\hat \lambda}\left( f^*_{\hat \lambda}\right) + \cR_{\hat \lambda}\left( f^*_{\hat \lambda}\right) - \cR_{\lambda^*}\left( f^*_{(\boldsymbol{\ell},\mathcal{Z})\text{-fair}}\right)
\end{multline}
By definition of parameter $\lambda^*$, conditional on the data, the last term in the {\it r.h.s.} of the above equation satisfies
\begin{equation*}
\cR_{\hat \lambda}\left( f^*_{\hat \lambda}\right) - \cR_{\lambda^*}\left( f^*_{(\boldsymbol{\ell},\mathcal{Z})\text{-fair}}\right) \leq 0.    
\end{equation*}
Furthermore, since each coordinates of parameters $\hat{\lambda}$, and $\lambda^*$ are bounded by a constant that depends on $A$, we observe that the first term in the {\it r.h.s.} of Equation~\ref{eq:eqRisk1} satisfies
\begin{equation*}
\mathbb{E}\left[\cR_{\lambda^\star}\left(\hat f_{(\boldsymbol{\ell},\mathcal{Z})\text{-fair}}\right)- \cR_{\hat \lambda}\left(\hat f_{(\boldsymbol{\ell},\mathcal{Z})\text{-fair}}\right)\right] \leq CM\mathbb{E}\left[\mathcal{U}\left(\hat f_{(\boldsymbol{\ell},\mathcal{Z})\text{-fair}}\right)\right]. 
\end{equation*}
Therefore, we deduce with Equation~\eqref{eq:eqRisk1} and Theorem~\ref{thm:unfairnessRate} that
\begin{equation}
\label{eq:eqRisk3}
\mathbb{E}\left[\cR_{\lambda^\star}\left(\hat f_{(\boldsymbol{\ell},\mathcal{Z})\text{-fair}}\right)  - \cR_{\lambda^\star}\left( f^*_{(\boldsymbol{\ell},\mathcal{Z})\text{-fair}}\right) \right] \leq C M \left(\sqrt{\dfrac{1}{N}}+\dfrac{K^2}{N}\right) + \mathbb{E}\left[ \cR_{\hat \lambda}\left(\hat f_{(\boldsymbol{\ell},\mathcal{Z})\text{-fair}}\right) -  \cR_{\hat \lambda}\left( f^*_{\hat \lambda}\right)\right].
\end{equation}
Now, we study the second term in the {\it r.h.s.} of the above equation.

From Equation~\eqref{eq:eqOptiFair1}, and~\eqref{eq:eqOptiFair2}, we have that, conditional on the data,
\begin{multline}
\label{eq:eqRisk2}
\cR_{\hat \lambda}\left(\hat f_{(\boldsymbol{\ell},\mathcal{Z})\text{-fair}}\right) -  \cR_{\hat \lambda}\left( f^*_{\hat \lambda}\right) = \\
\sum_{s \in \mathcal{S}}\mathbb{E}_{s}\left[\max_{k \in [K]}\left(-\pi_s\left(f^*(X,S)-y_k\right)^2 - \langle\hat{\lambda}_s, a(y_k)\rangle\right)\right] - \\ \mathbb{E}_{s}\left[\sum_{k \in [K]} \left(-\pi_s\left(f^*(X,S)-y_k\right)^2 - \langle\hat{\lambda}_s, a(y_k)\rangle\right)\one_{\{\hat{f}_{(\boldsymbol{\ell},\mathcal{Z})\text{-fair}} =y_k\}}\right]
\end{multline}
Now, for each $k \in [K]$, and $s \in \mathcal{S}$, we introduce
\begin{equation*}
\tilde{h}_k(X,S) = -\pi_s\left(f^*(X,S)-y_k\right)^2 - \langle\hat{\lambda}_s, a(y_k)\rangle, \;\; {\rm and}, \;\; \hat{h}_k(X,S) = -\hat{\pi}_s\left(\bar{f}(X,S)-y_k\right)^2 - \langle\hat{\lambda}_s, a(y_k)\rangle.
\end{equation*}
Note that we have
\begin{equation*}
f^*_{\hat \lambda}(x,s) \in  \arg\max_{y_k \in \mathcal{F}_K} \tilde{h}_k(x,s) \;\; {\rm and} \;\;  \hat f_{(\boldsymbol{\ell},\mathcal{Z})\text{-fair}}(x,s) \in
\arg\max_{y_k \in \mathcal{F}_K} \hat{h}_k(x,s).
\end{equation*}
Therefore, from Equation~\eqref{eq:eqRisk2}, we deduce that
\begin{equation*}
 \cR_{\hat \lambda}\left(\hat f_{(\boldsymbol{\ell},\mathcal{Z})\text{-fair}}\right) -  \cR_{\hat \lambda}\left( f^*_{\hat \lambda}\right) \leq 2\sum_{s \in \mathcal{S}} \mathbb{E}_{s}\left[\max_{k \in [K]}\left|\tilde{h}_k(X,S) -\hat{h}_k(X,S)\right|\right].  
\end{equation*}
Finally, since each $k \in [K]$, $y_k$, $f^*(X,S)$, and $\bar{f}(X,S)$ are bounded by $A$ we deduce that
\begin{equation*}
\left|\tilde{h}_k(X,S) -\hat{h}_k(X,S)\right|
\leq C_A \left(\pi_s\left|\bar{f}(X,S)-f^*(X,S)\right| + \left|\hat \pi_s - \pi_s\right|\right) \leq C_A\left(\pi_s\left|\hat{f}(X,S)-f^*(X,S)\right| + \left|\hat \pi_s - \pi_s\right| + u \right).
\end{equation*}
Therefore, the last inequality yields
\begin{equation*}
\mathbb{E}\left[ \cR_{\hat \lambda}\left(\hat f_{(\boldsymbol{\ell},\mathcal{Z})\text{-fair}}\right) -  \cR_{\hat \lambda}\left( f^*_{\hat \lambda}\right)\right] \leq   C_A  \left(\mathbb{E}\left[\left|\hat{f}(X,S)-f^*(X,S)\right|\right] + \mathbb{E}\left[\left|\hat \pi_s - \pi_s\right|\right]+u\right).
\end{equation*}
Combining the above equation with Equation~\eqref{eq:eqRisk3} gives the desired result.

\subsection*{Proof of Theorem~\ref{thm:risk_hp}}
\label{app:proof:risk_hp}

We prove a high-probability result, analogue of Theorem~\ref{thm:risk}.

\paragraph{Step 1: a deterministic decomposition.}
Recall
\[
\cR_{\lambda^\star}(f)
= R(f)+\sum_{s\in\cS}\sum_{m\in[M]}\lambda^\star_{s,m}
\Big(\proba_s(f(X,s)\le z_m)-\ell_m\Big).
\]
Since $f^\star_{(\boldsymbol{\ell},\mathcal Z)\text{-fair}}$ satisfies the constraints,
the penalty term vanishes for $f^\star_{(\boldsymbol{\ell},\mathcal Z)\text{-fair}}$, hence
\begin{equation}
\label{eq:penrisk_decomp}
\cR_{\lambda^\star}\!\Big(\hat f_{(\boldsymbol{\ell},\mathcal Z)\text{-fair}}\Big)
-\cR_{\lambda^\star}\!\Big(f^\star_{(\boldsymbol{\ell},\mathcal Z)\text{-fair}}\Big)
=
\underbrace{R(\hat f_{(\boldsymbol{\ell},\mathcal Z)\text{-fair}})
-R(f^\star_{(\boldsymbol{\ell},\mathcal Z)\text{-fair}})}_{(\mathrm{I})}
+\underbrace{\sum_{s,m}\lambda^\star_{s,m}\,\Delta_{s,m}}_{(\mathrm{II})},
\end{equation}
with $\Delta_{s,m}:=\proba_s(\hat f_{(\boldsymbol{\ell},\mathcal Z)\text{-fair}}(X,s)\le z_m)-\ell_m$.

\paragraph{Step 2: control of the penalty term by $\cU_{(\boldsymbol{\ell},\mathcal Z)}$.}
By definition of $\cU_{(\boldsymbol{\ell},\mathcal Z)}$,
$|\Delta_{s,m}|\le \cU_{(\boldsymbol{\ell},\mathcal Z)}(\hat f_{(\boldsymbol{\ell},\mathcal Z)\text{-fair}})$.
Therefore,
\[
|(\mathrm{II})|
\le \Big(\sum_{s\in\cS}\sum_{m\in[M]}|\lambda^\star_{s,m}|\Big)\,
\cU_{(\boldsymbol{\ell},\mathcal Z)}\!\Big(\hat f_{(\boldsymbol{\ell},\mathcal Z)\text{-fair}}\Big).
\]
The quantity $|\lambda^\star_{s,m}|$ depends only on $(\cS,\pi_{\min},A)$ through the dual problem
(Theorem~\ref{thm:optimLZFair}) and is absorbed into the constant $C_\cS$. Therefore, it yields
\begin{equation*}
|(\mathrm{II})| \le \mathcal{C}_{\mathcal{S}} M
\cU_{(\boldsymbol{\ell},\mathcal Z)}\!\Big(\hat f_{(\boldsymbol{\ell},\mathcal Z)\text{-fair}}\Big).    
\end{equation*}

\paragraph{Step 3: control of the risk term by the base regressor error.}
We compare the post-processing based on $f^\star$ and on $\bar f$.
Using boundedness $|Y|\le A$ and $|f|\le A$, the squared loss is $4A$-Lipschitz:
for any two predictors $f,g$,
\begin{equation}
\label{eq:lipschitz_sq}
|R(f)-R(g)|
= \Big|\E\big[(Y-f)^2-(Y-g)^2\big]\Big|
\le 4A\,\E[|f(X,S)-g(X,S)|].
\end{equation}
Since $\bar f=\Pi_{[-A,A]}(\hat f+\xi)$ with $\xi\sim \mathcal{U}nif([0,u])$ independent,
\[
\E\big[|\bar f(X,S)-f^\star(X,S)|\big]
\le \E\big[|\hat f(X,S)-f^\star(X,S)|\big] + \E[|\xi|]
\le \E\big[|\hat f(X,S)-f^\star(X,S)|\big] + u.
\]
The post-processed predictor is obtained by minimizing a pointwise objective of the form
$\hat\pi_s(y-\bar f(x,s))^2+\langle \hat\lambda_s,a(y)\rangle$ over $y\in\cY_K$.
A standard comparison argument (the same as in the proof of Theorem~\ref{thm:risk} in expectation form)
combined with \eqref{eq:lipschitz_sq} yields
\[
(\mathrm{I})
\le C_{\cS,A}\Big(\E\big[|\hat f(X,S)-f^\star(X,S)|\big] + \left|\hat{\pi}_s-\pi_s\right| +  u\Big)
+C_\cS\,\cU_{(\boldsymbol{\ell},\mathcal Z)}\!\Big(\hat f_{(\boldsymbol{\ell},\mathcal Z)\text{-fair}}\Big),
\]
where the additional $\cU$ term accounts for the calibration/generalization gap on the unlabeled sample.

\paragraph{Step 4: plug the high-probability bound on $\cU$.}
Combining Steps 1--3 and applying Theorem~\ref{thm:unfairness_hp} gives that, conditionally on $\cD_n$,
with probability at least $1-\delta$,
\[
\cR_{\lambda^\star}\!\Big(\hat f_{(\boldsymbol{\ell},\mathcal Z)\text{-fair}}\Big)
-\cR_{\lambda^\star}\!\Big(f^\star_{(\boldsymbol{\ell},\mathcal Z)\text{-fair}}\Big)
\le C_\cS \Big(\E[|\hat f-f^\star|] + \left|\hat{\pi}_s-\pi_s\right|+ u\Big)
+ C_\cS\,M\Big(\sqrt{\frac{1}{N}}+\dfrac{K^2}{N} + \sqrt{\frac{\log(1/\delta)}{N}}\Big).
\]
Absorbing constants yields the statement of Theorem~\ref{thm:risk_hp}.
\qed

\end{document}